\documentclass[letterpaper, 10 pt, conference]{ieeeconf}  

\IEEEoverridecommandlockouts      

\usepackage{comment}
\usepackage{graphicx}
\usepackage{multirow}
\usepackage{amssymb}
\usepackage{algorithm}
\usepackage{algorithmic}

\usepackage{amsmath}
\usepackage{booktabs} 
\usepackage{xcolor}
\usepackage{caption}
\usepackage{subcaption}

\newtheorem{definition}{Definition}

\newtheorem{remark}{Remark}
\newtheorem{corollary}{Corollary}
\newtheorem{lemma}{Lemma}

\begin{document}


\title{\LARGE \bf
  Gaussian Mixture-Based Inverse Perception Contract for Uncertainty-Aware Robot Navigation
}
\author{Bingyao Du$^1$, Joonkyung Kim$^2$ and Yiwei Lyu$^2$
\thanks{$^1$ The author is with the Department of Computer Science, Columbia University, New York, NY 10027, USA. Email: \tt \small  bd2747@columbia.edu}
\thanks{$^{2}${The authors are with the Department of Computer Science and Engineering, Texas A\&M University, College Station, TX 77843, USA. Email: \tt \small  joonkyung,yiweilyu@tamu.edu.}}
}

\maketitle
\thispagestyle{empty}
\pagestyle{empty}

\begin{abstract}                
Reliable navigation in cluttered environments requires perception outputs that are not only accurate but also equipped with uncertainty sets suitable for safe control. An inverse perception contract (IPC) provides such a connection by mapping perceptual estimates to sets that contain the ground truth with high confidence. Existing IPC formulations, however, instantiate uncertainty as a single ellipsoidal set and rely on deterministic trust scores to guide robot motion. Such a representation cannot capture the multi-modal and irregular structure of fine-grained perception errors, often resulting in over-conservative sets and degraded navigation performance. In this work, we introduce Gaussian Mixture-based Inverse Perception Contract (GM-IPC), which extends IPC to represent uncertainty with unions of ellipsoidal confidence sets derived from Gaussian mixture models. This design moves beyond deterministic single-set abstractions, enabling fine-grained, multi-modal, and non-convex error structures to be captured with formal guarantees. A learning framework is presented that trains GM-IPC to account for probabilistic inclusion, distribution matching, and empty-space penalties, ensuring both validity and compactness of the predicted sets. We further show that the resulting uncertainty characterizations can be leveraged in downstream planning frameworks for real-time safe navigation, enabling less conservative and more adaptive robot motion while preserving safety in a probabilistic manner.

\end{abstract}



\section{Introduction}
Autonomous systems are increasingly deployed in safety-critical domains such as urban driving, aerial delivery, and indoor service robotics. In these settings, perception errors, such as missed detections, inaccurate localization, or delayed updates, can quickly propagate into unsafe control actions~\cite{dean2020robust}. Robust control mitigates such risks by assuming bounded disturbances but often sacrifices efficiency for safety. Achieving safe yet effective autonomy thus requires perception outputs that are accurate and accompanied by uncertainty sets suitable for decision making and control~\cite{shao2024uncertainty}.

Existing methods quantify uncertainty in different ways. Robust control theory provides worst-case guarantees but is overly conservative in complex environments~\cite{berberich2025overview}. Bayesian deep learning~\cite{guo2017calibration} produces probabilistic outputs without certified guarantees once embedded in closed-loop systems, while conformal prediction~\cite{fontana2023conformal} offers distribution-free confidence sets but is limited to low-dimensional outputs. Perception contract (PC~\cite{hsieh2022verifying}) formalizes bounded perception errors for control synthesis, and its inverse form (IPC~\cite{sun2024learning}) directly maps perception estimates to sets guaranteed to contain the ground truth. IPCs are attractive because they can wrap around existing perception modules to generate uncertainty sets for safe navigation.

However, existing IPC formulations instantiate uncertainty as a single ellipsoidal set and rely on deterministic trust scores to guide robot motion.
This abstraction is suitable for simple outputs but fails for complex perception tasks like segmentation or point-cloud estimation, since it cannot capture the multimodal and irregular structure of fine-grained perception errors, which often results in over-conservative sets and degraded navigation performance.

In this work, we introduce Gaussian Mixture-Based Inverse Perception Contract (GM-IPC), which extends IPC to represent uncertainty with unions of ellipsoidal confidence sets derived from Gaussian mixture models. This design moves beyond deterministic single-set abstractions, enabling fine-grained, multimodal, and non-convex error structures to be captured with formal guarantees. A learning framework is presented that trains GM-IPC to account for probabilistic inclusion, distribution matching, and empty-space penalties, ensuring both validity and compactness of the predicted sets. Our main contributions are:
1) We formalize the representation of GM-IPC as a union of ellipsoidal confidence regions, allowing uncertainty to be expressed in multimodal and irregular forms beyond the limitations of single-ellipsoid IPC; 2) We propose a learning framework for training GM-IPC from perception data, together with theoretical guarantees on probabilistic coverage under mild assumptions; and 3) We demonstrate how the information encoded in GM-IPC can be exploited to inform downstream planning, enabling adaptive collision-avoidance maneuvers around obstacles of varying predicted significance during navigation.

\section{Related Work}
\subsection{Perception-based Safe Navigation}

Perception-based navigation has become a dominant paradigm in both autonomous driving and mobile robotics. Typical pipelines combine raw sensor inputs with learned or model-based perception modules and planners, often without explicit environment modeling. In autonomous driving, common perception formats include 2D images, semantic segmentation maps, or bird’s-eye-view feature maps extracted from multi-camera systems and LiDAR sensors (e.g., \cite{Caesar2020nuScenes, Lang2019PointPillars,Philion2020LiftSplatShoot,Janai2020AutonomousVisionSurvey}). For indoor mobile robots, point cloud–based representations are widely used, often paired with learned 3D detection or mapping modules (e.g., \cite{wang2019pseudo, votenet, liu2021group, yang2023sam3d}). VoteNet is a representative example that directly processes 3D point clouds to detect objects in cluttered indoor scenes. While such pipelines achieve strong empirical performance, they generally lack formal safety guarantees. The reliability of the controller depends on the training distribution and may degrade under domain shifts, sensor noise, or occlusions~\cite{lindemann2021robust}. This limitation poses significant risks in safety-critical settings such as healthcare, urban driving, or human–robot interaction.

Uncertainty in perception-based navigation arises from multiple sources~\cite{hsieh2022verifying, resiliency}: sensor-level noise and occlusion create missing or corrupted data; model-level uncertainty emerges from limited training data, distribution shifts, and inductive biases of neural architectures. For example, convolution assumes local spatial correlation. In navigation, a CNN may wrongly assume nearby pixels represent nearby obstacles, causing confusion in occluded scenes; task-level uncertainty stems from partial observability and temporal aliasing, where the same sensor input may correspond to multiple plausible world states. Without explicitly modeling these uncertainties, systems remain vulnerable to unsafe decision-making.
\vspace{-.4em}
\subsection{Uncertainty Modeling and Quantification for Perception}
\vspace{-.7em}
A central challenge in perception-based control is quantifying uncertainty for perception modules. PC~\cite{hsieh2022verifying, astorga2023perception} and IPC~\cite{sun2024learning} address this by representing perception errors with structured uncertainty sets. PC certifies that the perception error lies within a known bound, while IPC maps perceptual estimates directly to uncertainty sets guaranteed to contain the ground truth. These frameworks enable certified uncertainty wrappers around existing perception modules and integrate naturally with safe control synthesis. For low-dimensional outputs such as object positions, simple convex sets such as ellipsoids are typically sufficient.


As perception modules advance, however, their outputs become higher-dimensional and less structured. Fine-grained perceptual outputs—such as point clouds, bounding boxes, or segmentation masks—exhibit uncertainty that is multimodal (e.g., two disjoint regions may both plausibly contain an obstacle), irregular in shape (e.g., uncertainty may follow the contour of furniture rather than a convex blob), and history-dependent (e.g., occlusions may be resolved only after multiple observations over time)~\cite{chen2025hyperdimensional, hyper_seg}. Single-ellipsoid modeling is either overly conservative or fails to capture true obstacle geometry, leading to unsafe maneuvers.

Recent work has built high-confidence uncertainty sets within perception training or robust planning~\cite{ren2024recursively}. In contrast, we propose an expressive uncertainty representation, Gaussian Mixture–based IPC, that wraps around any vision-based perception module and integrates it directly with safe planners. By modeling multimodal, irregular errors as unions of confidence ellipsoids, GM-IPC enables adaptive conservatism that avoids both overreaction and underestimation in navigation.

\section{Method}
\label{sec:method}
\subsection{Problem Statement and Notation}
We consider a robot navigating in an environment with obstacles detected by a perception module. Let $x \in \mathcal{X}$ denote the robot state, $\hat{y} \in \mathcal{Y}$ the perceptual estimate, and $y \in \mathcal{Y}$ the unknown ground truth. In our experiment, $\hat{y}$ represents a subset of all points captured by the sensor, consisting of those classified by the perception algorithm as obstacle points. The goal of an IPC is to construct a set
\begin{equation}
A_\theta(x,\hat{y}) \subseteq \mathcal{Y},
\end{equation}
parameterized by $\theta$, such that with high probability
\[
y \in A_\theta(x,\hat{y}).
\]

This set acts as a safety certificate: if the planner synthesizes a control input that avoids $A_\theta(x,\hat{y})$, then the ground-truth obstacle will also be avoided with the same probability.

Classical IPCs construct $A_\theta$ as a single deterministic ellipsoid or sphere, which is too coarse to capture fine-grained perceptual uncertainty. Our goal is to design an IPC representation and training procedure that (i) better models complex error distributions, (ii) admits probabilistic coverage guarantees, and (iii) integrates naturally with downstream safe control.

\subsection{Representing Perceptual Uncertainty with Gaussian Mixtures}

\begin{figure*}[t]
    \centering
    \begin{subfigure}[t]{0.31\textwidth}
        \includegraphics[height=0.78\textwidth]{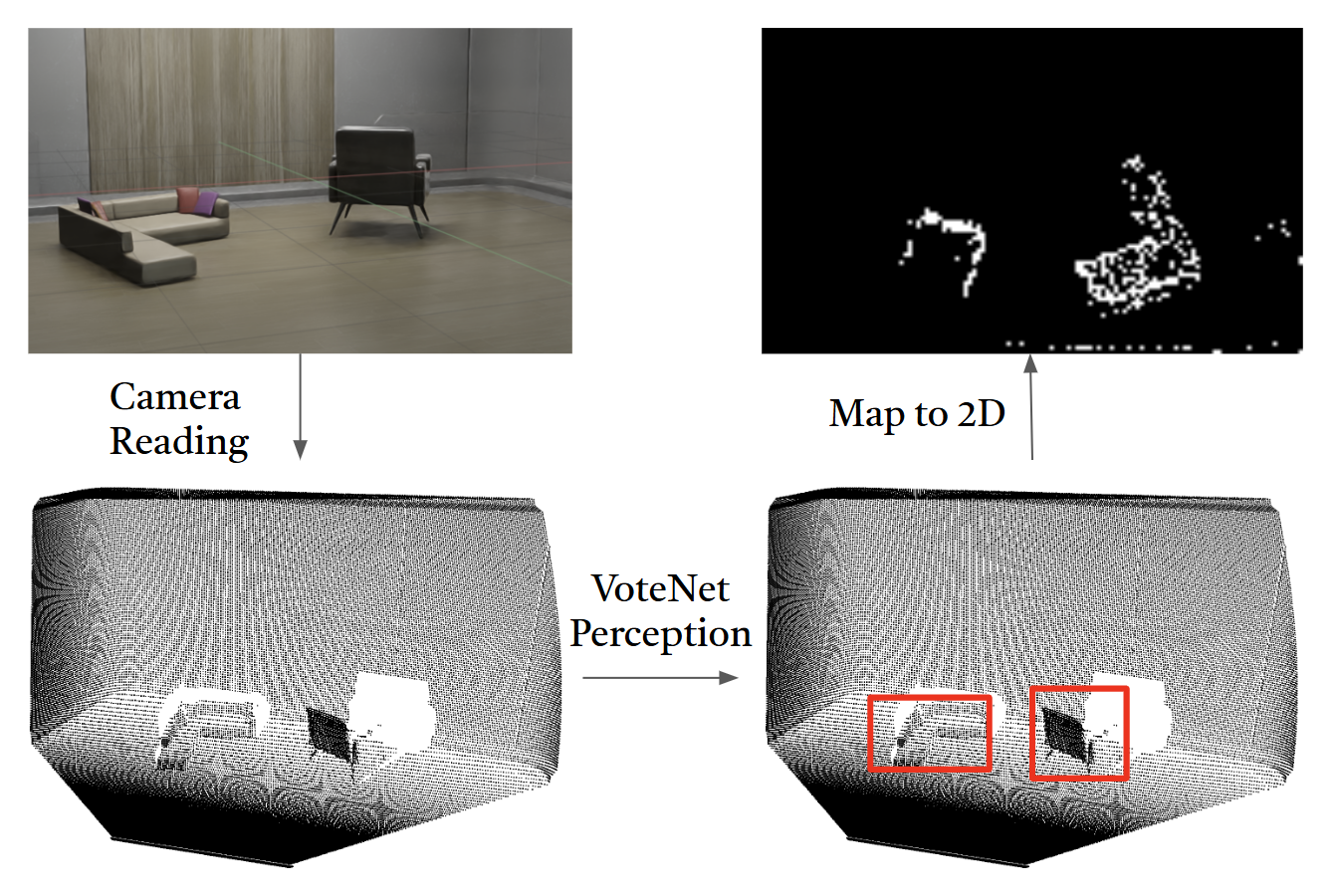}
        \caption{}
    \end{subfigure}
    \hspace{0.015\textwidth}
    \begin{subfigure}[t]{0.31\textwidth}
        \includegraphics[height=0.80\textwidth]{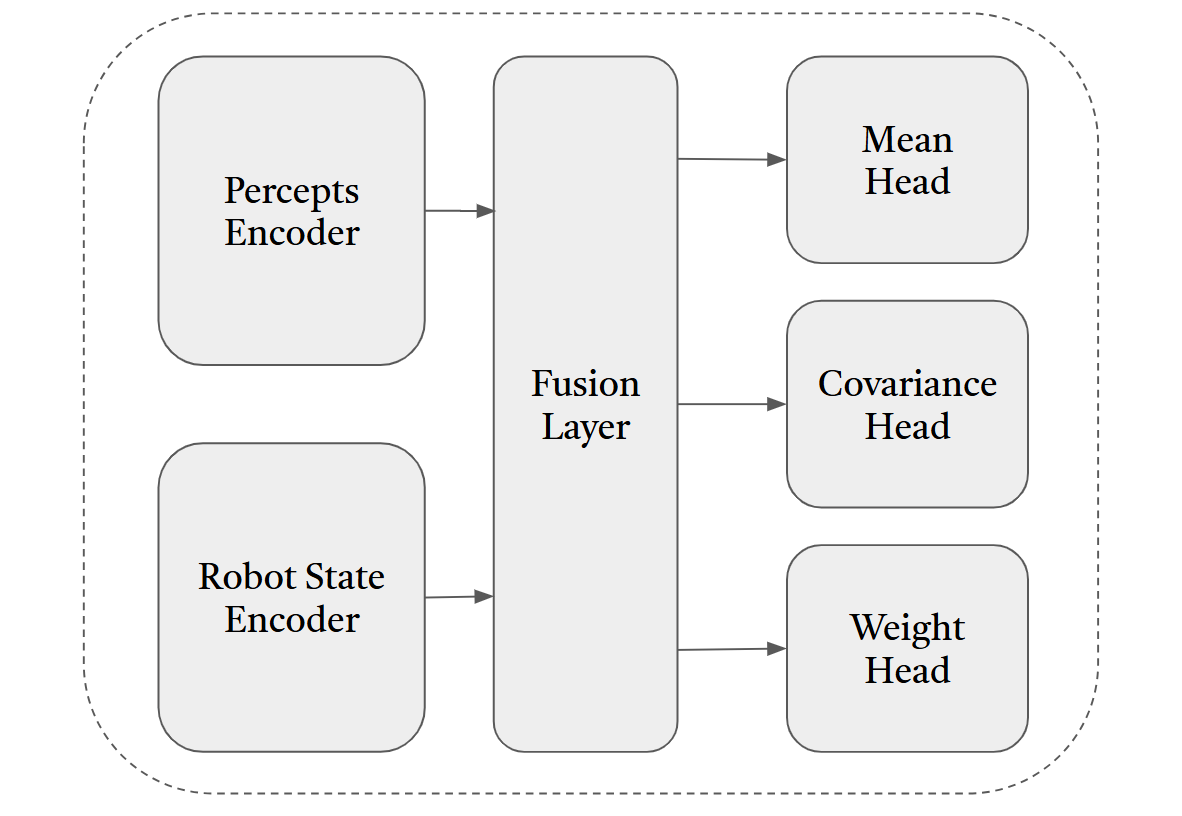}
        \caption{}
    \end{subfigure}
    \hspace{0.015\textwidth}
    \begin{subfigure}[t]{0.31\textwidth}
        \includegraphics[height=0.78\textwidth]{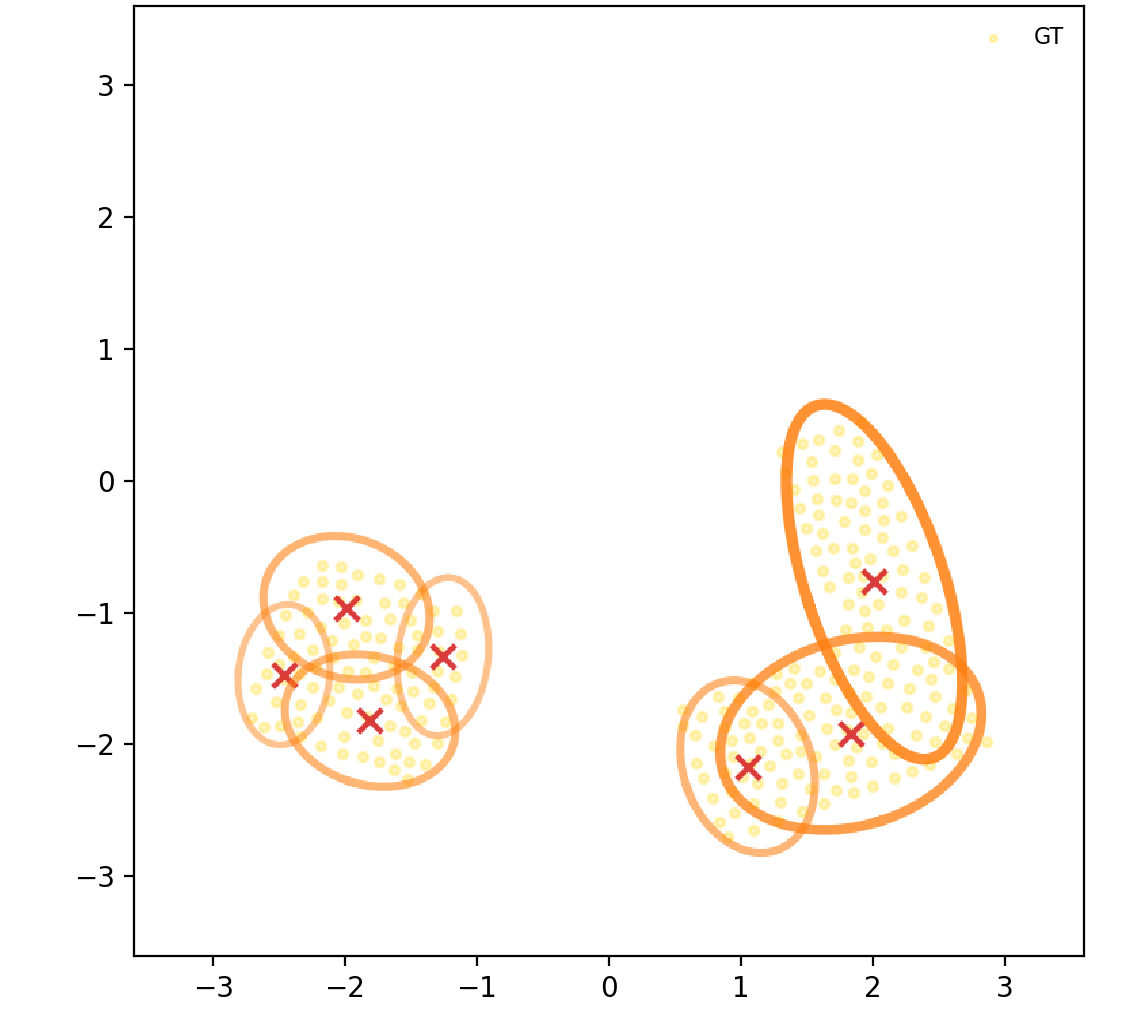}
        \caption{}
    \end{subfigure}
    \caption{Overview of GM-IPC. (a) Perception module: the robot camera captures the point cloud of environment, and the perception module predicts obstacle bounding boxes. Points inside the boxes are mapped to a 2D binary occupancy map. 
    (b) IPC network: The map is processed by a convolutional encoder, with the robot state processed by an MLP. Their representations are then fused through one fusion layer, followed by 3 linear heads to produce the Gaussian parameters
    (c) Uncertainty sets: given a confidence value $p$, the boundaries of all Gaussians are extracted to cover the obstacle region.}
    \label{fig:learning_gmm_ipc}
\end{figure*}

We model perception uncertainty using a Gaussian mixture model~\cite{em_algo, bishop2006pattern}. Each component $i \in \{1,\dots,K\}$ is a Gaussian distribution with mean $\mu_i$ and covariance $\Sigma_i \succ 0$. Here, $K$ is the number of Gaussian components used to represent the uncertainty set. In practice, we parameterize $\Sigma_i$ as 
$LL^T$-like form with softplus-positive diagonals, and add a small diagonal regularization term to ensure strict positive definiteness numerically. This induces ellipsoidal confidence regions, which we use to construct Inverse Perception Contracts.

\medskip
\begin{definition}[Ellipsoidal Confidence Region]
For $Y_i \sim \mathcal{N}(\mu_i, \Sigma_i)$ in $\mathbb{R}^d$, the ellipsoidal confidence region at level $\rho \in (0,1)$ is
\begin{equation}
E(\mu_i, \Sigma_i, \tau) = \{ y \in \mathbb{R}^d : (y-\mu_i)^\top \Sigma_i^{-1}(y-\mu_i) \leq \tau \},
\end{equation}
where $\tau = \chi^2_{d,\rho}$ is the chi-square quantile.
\end{definition}

By construction, since the quadratic form $(Y_i - \mu_i)^\top \Sigma_i^{-1}(Y_i - \mu_i)$ follows a chi-square distribution with $d$ degrees of freedom, the ellipsoid $E(\mu_i, \Sigma_i, \chi_{d, \rho}^2)$ contains the true value with probability $\rho$.


\medskip
\begin{definition}[Gaussian Mixture–based IPC]
Given state $x \in \mathcal{X}$ and perceptual estimate $\hat{y} \in \mathcal{Y}$, a GM-IPC is defined as
\begin{equation}
A_\theta(x, \hat{y}) = \bigcup_{i=1}^K E\!\left(\mu_i(x,\hat{y}), \Sigma_i(x,\hat{y}), \chi^2_{d,\rho}\right).
\end{equation}
The IPC condition is satisfied if the ground truth $y$ lies inside $A_\theta(x,\hat{y})$.
\end{definition}

\medskip
\begin{remark}
Classical IPCs correspond to the special case $K=1$. GM-IPCs generalize this formulation by using unions of ellipsoids, enabling multimodal and irregular uncertainty patterns to be captured while retaining a probabilistic coverage interpretation.
\end{remark}

\vspace{-0.2em}
\subsection{Learning GM-IPC} \vspace{-0.5em}

The parameters of $A_\theta(x,\hat{y})$ are learned from data so that the resulting sets cover ground-truth obstacles with high probability while remaining compact. We optimize a composite objective
\begin{equation}
L(\theta) \;=\; L_{\mathrm{incl}} \;+\; \alpha\,L_{\mathrm{nll}} \;+\; \beta\,L_{\mathrm{empty}},
\end{equation}
where $L_{\mathrm{incl}}$ enforces probabilistic coverage, $L_{\mathrm{nll}}$ encourages component specialization, and $L_{\mathrm{empty}}$ penalizes spurious coverage of free space. Scalars $\alpha,\beta \ge 0$ control the trade-off.

\textbf{Inclusion loss.}
Let the training sample at time $t$ provide robot state $x_t$, perceptual estimate $\hat{y}_t$, and ground-truth obstacle points $\mathcal{V}_t=\{y\}$. For each component $i$, define a \emph{soft membership function}
$$
q_i:\mathbb{R}^d \rightarrow (0,1), 
\qquad 
$$
\begin{equation}
q_i(y) \;=\; \sigma\!\left(\frac{\chi^2_{d,\rho} - (y-\mu_i)^\top \Sigma_i^{-1} (y-\mu_i)}{s}\right),
\end{equation}
where $\sigma(\cdot)$ is the logistic function and $s>0$ is a fixed smoothing scale. The weighted probability that $y$ is covered by at least one component is
\begin{equation}
\begin{aligned}
p_{\mathrm{cover}}(y) 
&= 1 - \exp\!\left(\sum_{i=1}^K w_i \,\log\big(1 - q_i(y)\big)\right) \\
&= 1 - \prod_{i=1}^K \big(1 - q_i(y)\big)^{w_i}.
\end{aligned}
\end{equation}
The inclusion loss is then
\begin{equation}
L_{\mathrm{incl}}(\theta; x_t,\hat{y}_t,\mathcal{V}_t)
\;=\; -\frac{1}{|\mathcal{V}_t|}\sum_{y\in\mathcal{V}_t}\log p_{\mathrm{cover}}(y).
\end{equation}
Using $-\log u \ge 1-u$ for $u\in(0,1]$, this directly upper-bounds the miscoverage probability:
\begin{equation}
\mathbb{E}\!\big[\mathbf{1}\{y \notin A_\theta(x_t,\hat{y}_t)\}\big]
\;\le\;
\mathbb{E}\!\big[-\log p_{\mathrm{cover}}(y)\big].
\label{eq:upperbound_miscoverage}
\end{equation}
For reference, the ideal non-smooth formulation uses hard set indicators
$\mathbf{1}\{y \in E_i\}$ with $E_i = E(\mu_i,\Sigma_i,\chi^2_{d,\rho})$:
\begin{equation}
\begin{aligned}
p_{\mathrm{cover}}^{\star}(y)
&= 1 - \prod_{i=1}^K \bigl( 1 - \mathbf{1}\{ y \in E_i \} \bigr), \\
L_{\mathrm{incl}}^{\star}(\theta; x_t,\hat{y}_t,\mathcal{V}_t)
&= -\frac{1}{|\mathcal{V}_t|}\sum_{y\in\mathcal{V}_t} \log p_{\mathrm{cover}}^{\star}(y).
\label{eq:incl_loss}
\end{aligned}
\end{equation}

Because hard membership and the resulting OR over components are non-differentiable, we optimize the smooth surrogate in (5)–(6), which remains differentiable while still controlling miscoverage via an upper bound.

\textbf{Distribution-matching loss.}
To promote non-redundant components that specialize to different regions, we fit ground-truth points under the Gaussian mixture:
\begin{equation}
L_{\mathrm{nll}}(\theta; \mathcal{V}_t)
= -\frac{1}{|\mathcal{V}_t|}\sum_{y\in\mathcal{V}_t}
\log\!\left(\sum_{i=1}^K w_i \,\mathcal{N}\!\big(y \,\big|\, \mu_i,\Sigma_i\big)\right).
\label{eq:nll_loss}
\end{equation}

\textbf{Empty-space penalty.}
To discourage ellipsoids from extending into unsupported regions, we penalize coverage of free-space cells. Let $\Omega$ be a workspace discretization and $\Omega_{\mathrm{neg}}\subset\Omega$ the cells known to be obstacle-free. For cell $c$, define per-component soft membership $p_i(c)$ analogously to $q_i(\cdot)$ (e.g., by evaluating at the cell center). The union probability over the mixture is
\begin{equation}
p_{\mathrm{union}}(c) \;=\; 1 - \prod_{i=1}^K \big(1 - p_i(c)\big),
\end{equation}
and the empty-space penalty is
\begin{equation}
L_{\mathrm{empty}}(\theta; \Omega_{\mathrm{neg}})
\;=\; \frac{1}{|\Omega_{\mathrm{neg}}|}\sum_{c\in\Omega_{\mathrm{neg}}} \,\phi\!\big(p_{\mathrm{union}}(c)\big),
\label{eq:empty_loss}
\end{equation}
where $\phi(\cdot)$ is a weighting that emphasizes boundary or hard negatives (e.g., focal-style weighting).

\medskip
In implementation, we parameterize GM-IPC with a lightweight network that fuses perception and state (see Fig.~\ref{fig:learning_gmm_ipc}). Filtered point clouds are rasterized into a fixed-resolution occupancy map and encoded via a shallow convolutional encoder; the robot state $x_t$ is embedded with a small MLP. Fused features feed three linear heads that output $\{\mu_i\}_{i=1}^K$, $\{\Sigma_i\}_{i=1}^K$ (parameterized to ensure $\Sigma_i \succ 0$), and $\{w_i\}_{i=1}^K$ (normalized to range $(0, 1]$).

\subsection{Theoretical Guarantees}
\label{sec:theory}

We state generalization guarantees at the \emph{trial} level and connect them to probabilistic coverage.

\noindent\textbf{Setup.}
Let a trial be $Z=(x_{1:T}, \hat{y}_{1:T}, \mathcal{V}_{1:T})$. Trials $\{Z_m\}_{m=1}^M$ are generated by randomizing the environment (e.g., obstacle placement and start/goal) and then running a fixed controller; thus, ${Z_m}$ are i.i.d. — that is, each trial $Z_m$ is generated independently under identical experimental conditions and follows the same underlying distribution, even though samples within a trial are temporally correlated. $R(\theta)$ denotes the expected trial-level risk under the data-generating distribution,
while $\hat{R}_M(\theta)$ is its empirical estimate computed as the average loss over $M$ i.i.d.\ trials. Define
\begin{equation}
L_{\mathrm{trial}}(\theta; Z) = \frac{1}{T}\sum_{t=1}^T L_{\mathrm{incl}}(\theta; x_t,\hat{y}_t,\mathcal{V}_t),
\end{equation}
\begin{equation}
R(\theta) = \mathbb{E}[\,L_{\mathrm{trial}}\,], \quad \hat{R}_M(\theta) = \frac{1}{M}\sum_{m=1}^M L_{\mathrm{trial}}(\theta; Z_m).
\end{equation}

\noindent\textbf{From empirical loss to coverage.}
Section III.C established inequality~\ref{eq:upperbound_miscoverage}, which upper-bounds the (stepwise) miscoverage by the inclusion loss.
Averaging over $t$ shows that controlling $L_{\mathrm{trial}}$ controls the \emph{trial-averaged} miscoverage rate.

\medskip
\begin{lemma}[PAC generalization bound over trials]
\label{lem:pac}
Let $\mathcal{F}=\{ Z \mapsto L_{\mathrm{trial}}(\theta; Z): \theta\in\Theta\}\subset[0,1]$
have pseudo-dimension $P<\infty$.\footnote{Clipping $L_{\mathrm{incl}}$ to $[0,1]$ is standard to ensure $L_{\mathrm{trial}}\in[0,1]$; constants adjust accordingly.}
For any $\delta\in(0,1)$, with probability at least $1-\delta$ over $M$ i.i.d.\ trials,
the original PAC generalization bound~\cite{bartlett2002rademacher,shalev2014understanding} gives
\begin{equation}
R(\theta) \;\le\; \hat{R}_M(\theta)
\;+\; \sqrt{\frac{\ln(1/\delta)}{2M}}.
\end{equation}
Moreover, according to PAC theory based on uniform convergence with empirical
Rademacher complexity,
\begin{equation}
R(\theta) \;\le\; \hat{R}_M(\theta) \;+\; 2\,\mathcal{R}_M(\mathcal{F})
\;+\; \sqrt{\frac{\ln(1/\delta)}{2M}},
\end{equation}
where $\mathcal{R}_M(\mathcal{F})$ is the empirical Rademacher complexity. Furthermore,
$\mathcal{R}_M(\mathcal{F}) \le C\sqrt{P/M}$ for a universal constant $C$, yielding
\begin{equation}
R(\theta) \;\le\; \hat{R}_M(\theta) \;+\;
O\!\Big(\sqrt{\tfrac{P+\ln(1/\delta)}{M}}\Big).
\end{equation}

\end{lemma}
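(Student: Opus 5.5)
The plan has three parts, one for each displayed inequality in Lemma~\ref{lem:pac}. Throughout, the $M$ trials are i.i.d. and every $L_{\mathrm{trial}}(\theta;\cdot)$ takes values in $[0,1]$ (after the clipping in the footnote).

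\textbf{First bound.} Fix $\theta$ before the data are drawn. The variables $L_{\mathrm{trial}}(\theta;Z_m)$ are then i.i.d. in $[0,1]$ with mean $R(\theta)$. Hoeffding's inequality gives $\Pr[R(\theta)-\hat R_M(\theta)>\epsilon]\le e^{-2M\epsilon^2}$, and setting the right-hand side to $\delta$ yields $\epsilon=\sqrt{\ln(1/\delta)/(2M)}$. This argument is valid only for a data-independent $\theta$. Since $\theta$ is learned, I would state this restriction explicitly and use the second bound to cover the learned parameter.

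\textbf{Second bound (uniform over $\Theta$).} I would define $\Phi(Z_{1:M})=\sup_{\theta\in\Theta}\bigl(R(\theta)-\hat R_M(\theta)\bigr)$. Changing one trial changes $\Phi$ by at most $1/M$, so McDiarmid's inequality gives $\Phi\le\mathbb{E}\Phi+\sqrt{\ln(1/\delta)/(2M)}$ with probability at least $1-\delta$. The standard symmetrization argument, using a ghost sample and Rademacher signs, then gives $\mathbb{E}\Phi\le 2\,\mathbb{E}\,\mathcal{R}_M(\mathcal{F})$.

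The statement uses the empirical Rademacher complexity, so one more step is needed. Replacing the expected complexity by $\mathcal{R}_M(\mathcal{F})$ requires a second McDiarmid application, since $\mathcal{R}_M$ also has bounded differences $1/M$. A union bound over the two events, each at level $\delta/2$, gives the correct form. The cost is that the constant in front of $\sqrt{\ln(2/\delta)/M}$ becomes $3$ rather than $1$. Alternatively, the statement can be read with the expected Rademacher complexity, in which case the constant $1$ stands as written. I would note this constant adjustment, in the spirit of the footnote.

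\textbf{Third bound (pseudo-dimension).} Here I would bound $\mathcal{R}_M(\mathcal{F})$ by $C\sqrt{P/M}$. Dudley's entropy integral gives $\mathcal{R}_M(\mathcal{F})\le \frac{12}{\sqrt M}\int_0^1\sqrt{\ln\mathcal{N}(\mathcal{F},\epsilon,L_2(P_M))}\,d\epsilon$. Haussler's covering bound for classes of pseudo-dimension $P$ gives $\mathcal{N}\le e(P+1)(2e/\epsilon)^{P}$. Then $\sqrt{\ln\mathcal{N}}\lesssim\sqrt{P\ln(1/\epsilon)}$, the integral is finite, and $\mathcal{R}_M\le C\sqrt{P/M}$ with $C$ universal. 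Substituting into the second bound and using $\sqrt a+\sqrt b\le\sqrt{2(a+b)}$ gives $O\bigl(\sqrt{(P+\ln(1/\delta))/M}\bigr)$.

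\textbf{Main obstacle.} I expect the main work to be in two places. The first is keeping this step clean: the $\ln(1/\epsilon)$ factor must be integrated rather than producing a $\sqrt{P\ln M}$ term, which is what cruder Sauer-type arguments give. The second is the first-bound versus uniform-bound distinction for the learned $\theta$, which must be handled explicitly.
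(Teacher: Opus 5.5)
Your proposal is correct and takes the paper's route: the paper gives no argument beyond citing the Hoeffding bound, the Rademacher uniform-convergence bound, and $\mathcal{R}_M(\mathcal{F})\le C\sqrt{P/M}$. Your McDiarmid/symmetrization/Dudley--Haussler derivation is what stands behind those citations, and your two caveats are exactly the qualifications the statement as written needs: the first bound holds only for a data-independent $\theta$, and the empirical rather than expected Rademacher complexity costs $3\sqrt{\ln(2/\delta)/(2M)}$ in place of $\sqrt{\ln(1/\delta)/(2M)}$. One small slip: Haussler's bound $e(P+1)(2e/\epsilon)^P$ is for $L_1(P_M)$, so for $L_2(P_M)$ covering you should use $\|f-g\|_{L_2}^2\le\|f-g\|_{L_1}$ to get $e(P+1)(2e/\epsilon^2)^P$, which changes only constants.
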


Combining (5) with Lemma~\ref{lem:pac} implies that, with probability at least $1-\delta$, a small empirical inclusion loss leads to a small \emph{true} miscoverage rate, provided the hypothesis class has bounded complexity. 

\medskip
\begin{remark}
(1) Trials are treated as i.i.d.\ draws even though samples within a trial are temporally correlated; the learning guarantees apply at the trial level. 
(2) \emph{Lemma~\ref{lem:pac}} provides the generalization bound that controls the realized miscoverage under the data-generating process.
\end{remark}

\vspace{-1em}
 
\medskip
\begin{corollary}[Consistency of ERM~\cite{shalev2014understanding} for GM-IPC]
\label{cor:consistency}
Let $\hat{\theta}_M \in \arg\min_{\theta\in\Theta} \hat{R}_M(\theta)$ be any empirical risk minimizer for the trial loss $L_{\mathrm{trial}}$.
If the uniform convergence in Lemma~\ref{lem:pac} holds and the population risk $R(\theta)$ has a unique minimizer $\theta^\star$, then
\begin{equation}
R(\hat{\theta}_M) \;\xrightarrow[M\to\infty]{\;\;P\;\;} R(\theta^\star).
\end{equation}
If the model is well specified (there exists $\theta^\dagger$ achieving the target coverage), then $R(\theta^\star)=R(\theta^\dagger)$; otherwise, $R(\theta^\star)$ is the best-in-class risk over $\Theta$.
\end{corollary}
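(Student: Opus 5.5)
The plan is the standard ERM consistency argument, with the uniform deviation bound from Lemma~\ref{lem:pac} as the only probabilistic input. Set $\epsilon_M(\delta) := 2\,\mathcal{R}_M(\mathcal{F}) + \sqrt{\ln(1/\delta)/(2M)}$. The first step is to note that the argument behind Lemma~\ref{lem:pac}, namely McDiarmid's inequality applied to $\sup_{\theta}|R(\theta)-\hat{R}_M(\theta)|$ followed by symmetrization, controls both sides. So with probability at least $1-2\delta$ (using a union bound over the two tails),
\[
\sup_{\theta\in\Theta}\bigl|R(\theta)-\hat{R}_M(\theta)\bigr| \le \epsilon_M(\delta).
\]
Both directions are needed. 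The upper direction is exactly what the lemma states. The lower direction, $\hat{R}_M(\theta^\star)\le R(\theta^\star)+\epsilon_M(\delta)$, needs no uniformity: since $\theta^\star$ is fixed, Hoeffding's inequality applied to $L_{\mathrm{trial}}(\theta^\star;Z_m)\in[0,1]$ suffices. I would use that simpler route to avoid re-deriving two-sided uniform convergence.

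The second step is the three-term decomposition:
\[
R(\hat\theta_M)-R(\theta^\star) = \bigl[R(\hat\theta_M)-\hat R_M(\hat\theta_M)\bigr] + \bigl[\hat R_M(\hat\theta_M)-\hat R_M(\theta^\star)\bigr] + \bigl[\hat R_M(\theta^\star)-R(\theta^\star)\bigr].
\]
\begin{itemize}
\item The first bracket is at most $\epsilon_M(\delta)$ by the uniform bound of Lemma~\ref{lem:pac}. This holds because that bound applies simultaneously to all $\theta$, and hence to the data-dependent $\hat\theta_M$.
\item The middle bracket is $\le 0$ by the definition of ERM.
\item The last bracket is at most $\sqrt{\ln(1/\delta)/(2M)}$ by Hoeffding.
\end{itemize}
Since $\theta^\star$ minimizes $R$, the left-hand side is also $\ge 0$. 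Hence, with probability at least $1-2\delta$,
\[
0\le R(\hat\theta_M)-R(\theta^\star)\le 2\,\mathcal{R}_M(\mathcal{F})+2\sqrt{\ln(1/\delta)/(2M)}.
\]

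The third step is to let $M\to\infty$. By Lemma~\ref{lem:pac}, $\mathcal{R}_M(\mathcal{F})\le C\sqrt{P/M}\to 0$. Fix $\eta>0$ and $\delta>0$, and choose $M_0$ so that the right-hand side is below $\eta$ for all $M\ge M_0$. Then $\Pr\bigl(|R(\hat\theta_M)-R(\theta^\star)|>\eta\bigr)\le 2\delta$ for all such $M$. Since $\delta$ is arbitrary, this gives convergence in probability. Uniqueness of $\theta^\star$ is not actually needed for convergence of risk values; it only makes $R(\theta^\star)$ unambiguous, and I would remark on this.

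The final claims are definitional. If some $\theta^\dagger\in\Theta$ attains the target coverage and this corresponds to minimal risk, then $R(\theta^\star)\le R(\theta^\dagger)$ by optimality of $\theta^\star$. Conversely, $\theta^\dagger$ being a risk minimizer gives equality $R(\theta^\star)=R(\theta^\dagger)$. I expect this to be the main obstacle, because ``achieving target coverage'' must be interpreted as ``minimizing $R$ over $\Theta$'' for the equality to hold. I would state that interpretation explicitly, linking it to \eqref{eq:upperbound_miscoverage}. Otherwise, $R(\theta^\star)=\inf_\Theta R$ is by definition the best-in-class risk. A minor technical point is measurability of $\hat\theta_M$ and of the supremum. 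I would assume it, as is standard for a separable parametric class such as the network in Fig.~\ref{fig:learning_gmm_ipc}.
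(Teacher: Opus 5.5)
Your proposal is correct and follows the paper's argument. The paper's proof is exactly the chain $R(\hat{\theta}_M) \le \hat{R}_M(\hat{\theta}_M) + \varepsilon \le \hat{R}_M(\theta^\star) + \varepsilon \le R(\theta^\star) + 2\varepsilon$, obtained from uniform convergence and the ERM property; your refinements are sound and make rigorous what the paper leaves informal: justifying the last step by Hoeffding at the fixed $\theta^\star$ (Lemma~\ref{lem:pac} is stated only one-sidedly, while the paper simply asserts two-sided uniform convergence), making the lower bound $R(\hat{\theta}_M)\ge R(\theta^\star)$ explicit, noting that uniqueness is unnecessary, and flagging that the well-specified clause requires ``target coverage'' to mean risk-minimizing.
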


\begin{proof}
Lemma~\ref{lem:pac} implies $\sup_{\theta\in\Theta}|R(\theta)-\hat{R}_M(\theta)| \to 0$ in probability. The inequalities below follow from uniform convergence, implying that minimizing $\hat{R}_M$ yields a solution whose population risk is close to optimal. For any $\varepsilon>0$, with high probability and large $M$,
\begin{equation}
R(\hat{\theta}_M) \le \hat{R}_M(\hat{\theta}_M) + \varepsilon \le \hat{R}_M(\theta^\star) + \varepsilon \le R(\theta^\star) + 2\varepsilon,
\end{equation}
hence $R(\hat{\theta}_M)\to R(\theta^\star)$ in probability.
\end{proof}

\begin{remark}
Corollary~\ref{cor:consistency} establishes statistical consistency of ERM, but does not guarantee convergence of the nonconvex training algorithm; in practice, we rely on approximate ERM via stochastic optimization.
\end{remark}

\vspace{-0.8em}
\subsection{Navigation Pipeline} \label{sec:nav_pip} 

Safe navigation requires explicit constraints that guarantee the robot will not enter obstacle regions predicted by the perception stack. Control Barrier Functions (CBFs~\cite{ames2019cbf}) provide such constraints by enforcing forward invariance of a safety set along the system dynamics. In our pipeline, the GM-IPC supplies a collection of ellipsoidal obstacle sets, which we convert into CBFs and embed in a model predictive controller.

\begin{figure}[h!]
  \centering
  \includegraphics[width=0.95\linewidth]{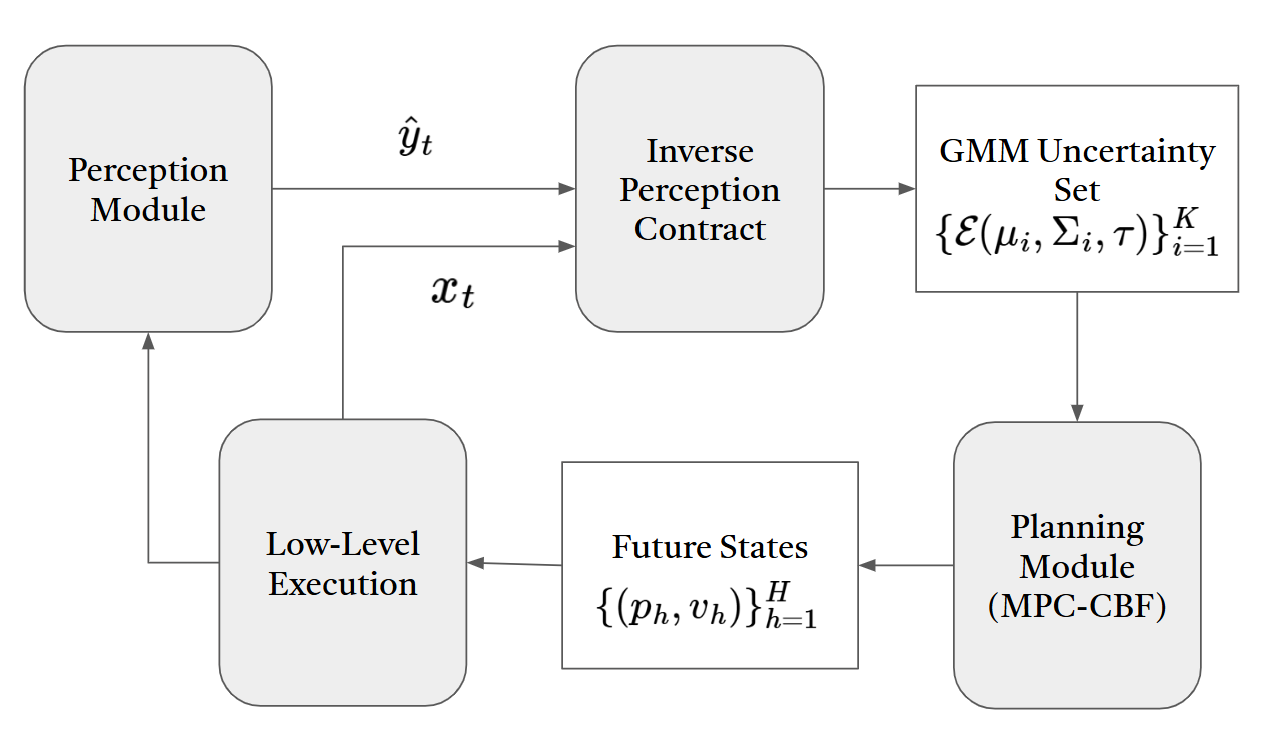}
  \caption{Overview of navigation pipeline. At each time step, raw sensor readings $\textnormal{P}_t$ are processed by the perception module to produce perceptual estimates $\hat{\textnormal{y}}_t$. The IPC network then generates a Gaussian Mixture uncertainty set, which is passed to the MPC-CBF planner to compute a sequence of safe future states ${(\textnormal{p}_h,\textnormal{v}_h)}_{h=1}^{H}$. The first control output is applied to the robot, and the process repeats at the next time step. White blocks denote intermediate results, while gray blocks represent modules.}  \label{fig:navigation_pipeline}
\end{figure}

\textbf{From Ellipsoids to Barrier Functions.}
Given a Gaussian component with mean $\mu_k \in \mathbb{R}^2$ and covariance $\Sigma_k \in \mathbb{R}^{2\times 2}$, we form a $\rho$ confidence ellipsoid (we choose $\rho =$ 95\% empirically) to represent the obstacle set seen by the controller. Let $\Sigma_k = V\Lambda V^\top$ with eigenvalues $\lambda_{\max}, \lambda_{\min}$ and principal eigenvector $[v_{11}, v_{21}]^\top$. With $\chi^2 = \chi^2_{2,0.95} = 5.991$, the semi-axes and orientation are
\begin{equation*}
a_k = \sqrt{\lambda_{\max} \chi^2}, \quad
b_k = \sqrt{\lambda_{\min} \chi^2}, \quad
\theta_k = \arctan2(v_{21}, v_{11}).
\end{equation*}
Let $(x_c,y_c)$ be the ellipsoid center. The associated barrier function
\begin{equation}
h_k(x) = 1 - \left(\tfrac{\tilde{x}}{a_k}\right)^2 - \left(\tfrac{\tilde{y}}{b_k}\right)^2, 
\; \;
\begin{bmatrix}\tilde{x}\\ \tilde{y}\end{bmatrix} = R(-\theta_k)
\begin{bmatrix}x-x_c \\ y-y_c\end{bmatrix},
\label{eq:hk_def}
\end{equation}
satisfies $h_k(x)\ge0$ outside the ellipsoid and $h_k(x)=0$ on its boundary. Enforcing a discrete-time CBF condition on $h_k$ yields forward invariance of the safe set, which is precisely the property needed for obstacle avoidance under time-varying perception.

\textbf{Perception-IPC Interface.}
At time $t$, the perception stack provides an occupancy representation $M_t$ and the robot state is $x_t$.
The IPC network maps $(M_t,x_t)$ to a Gaussian mixture
$\{(\mu_k,\Sigma_k,w_k)\}_{k=1}^K$, with $\mu_k\in\mathbb{R}^2$ and
$\Sigma_k\in\mathbb{R}^{2\times 2}$. Each $(\mu_k,\Sigma_k)$ induces a $95\%$
confidence ellipsoid via $\chi^2_{2,0.95}$ and the barrier function $h_k(x)$ in~\eqref{eq:hk_def}.
These barriers are updated online and used in the MPC-CBF(~\cite{mpccbf}) problem~\eqref{eq:nav_mpccbf}.

\textbf{Safe Planning via MPC-CBF.}
To leverage formal forward invariance properties with the probabilistic safety regions characterized by GM-IPC, we adopt the MPC-CBF formulation from~\cite{mpccbf} and incorporate the CBF constraints of all ellipsoids into the predictive optimization. At time~$t$, the controller solves \vspace{-0.5em}
\begin{equation}
\begin{aligned}
J_t^*(x_t) = \min_{u_{t:t+N-1|t}} \;\; & 
p(x_{t+N|t}) + \sum_{k=0}^{N-1} q(x_{t+k|t}, u_{t+k|t}) \\
\text{s.t.}\;\; 
& x_{t+k+1|t} = x_{t+k|t} + u_{t+k|t}\Delta t, \\
& x_{t+k|t} \in \mathcal{X}, \; u_{t+k|t} \in \mathcal{U}, \\
& x_{t|t} = x_t, \quad x_{t+N|t} \in \mathcal{X}_f, \\
& \Delta h(x_{t+k|t}, u_{t+k|t}) \geq -\gamma \, h(x_{t+k|t}),
\end{aligned}
\label{eq:nav_mpccbf}
\end{equation}
where $\Delta h(x,u) = h(x^+) - h(x)$ is the discrete-time CBF condition and $\gamma \in (0,1]$ is a relaxation parameter. Embedding~\eqref{eq:hk_def} into~\eqref{eq:nav_mpccbf} enforces forward invariance of the complement of each predicted ellipsoid, thereby guaranteeing obstacle avoidance consistent with the IPC.

To adapt conservativeness to perception quality, we use an adaptive relaxation $\gamma_{\text{ad}}$. First compute a confidence-weighted average
\begin{equation}
\bar{\rho} = 
\sum_{i \in \mathcal{O}}
\frac{ w^{\text{dist}}_i \, \rho_i }
     { \sum_{j \in \mathcal{O}} w^{\text{dist}}_j \, \rho_j }, 
\; \; \;
w^{\text{dist}}_i = \frac{1}{\|x - x_{c,i}\| + \epsilon},
\end{equation}
where $x\in\mathbb{R}^2$ is the robot position, $w^{\text{dist}}_i$ is an inverse-distance weight ($\epsilon>0$), and $\rho_i$ is the normalized confidence for obstacle $i$. The adaptive relaxation is
\begin{equation}
\gamma_{\text{ad}} = \gamma_{\min} 
+ (\gamma_{\max} - \gamma_{\min})
\cdot \frac{\bar{\rho} - \rho_{\min}}{\rho_{\max} - \rho_{\min}},
\end{equation}
with $\gamma_{\min}, \gamma_{\max} \in (0,1]$ user-defined and $\rho_{\min}, \rho_{\max}$ the minimum/maximum confidence levels. When $\bar{\rho}$ is small (nearby obstacles with low confidence dominate), $\gamma_{\text{ad}}\!\to\gamma_{\min}$, tightening the constraint $\Delta h \ge -\gamma_{\text{ad}} h$ and enlarging the effective safety margin. When obstacles are distant and $\rho_i$ is high, $\bar{\rho}$ increases and $\gamma_{\text{ad}}\!\to\gamma_{\max}$, relaxing the constraint and allowing more efficient motion.

\textbf{Execution and Iteration.}
The first control input from~\eqref{eq:nav_mpccbf} is applied, the robot transitions to the next state, and the cycle repeats with updated perception, IPC estimates, and CBFs.

\section{Experiments}

We evaluate GM-IPC on (i) \emph{validity and compactness} of learned uncertainty sets and (ii) \emph{utility for online navigation} with MPC-CBF controller. 

\subsection{Setup and Protocol}

\noindent\textbf{Simulation environment and robot:}
All experiments are run in Isaac Sim Simulator with the Nova Carter platform and a depth camera that provides point clouds in the camera frame.

\noindent\textbf{Data generation:}
Scenes are randomized for each trial, with variations in start/goal locations, obstacle configurations, and object instances. Point clouds are streamed at each step; the pipeline in Sec.~\ref{sec:method} produces ellipsoids used by the controller.

\noindent\textbf{Training protocol:}
Models are trained on randomized trials separate from evaluation scenes. Unless noted, the maximum number of ellipsoids is $K_{\max}{=}5$ (single-object) and $K_{\max}{=}7$ (multi-object); loss weights are fixed to $(\alpha,\beta)$ as in Sec.~\ref{sec:method} C. We use Adam, batch size 16, learning rate $1\mathrm{e}{-3}$ with cosine decay, and early stopping on validation inclusion loss.

\subsection{Scenarios}

We consider four indoor navigation scenarios of increasing geometric complexity, with a fixed arena size:
\begin{itemize}
    \item \textbf{Single chair (simple):} compact geometry, typically well-approximated by one ellipsoid.
    \item \textbf{Single L-shaped sofa (moderate):} non-convex footprint, challenging for single-ellipsoid coverage.
    \item \textbf{Multiple sofas (hard):} several instances of a complex shape.
    \item \textbf{Mixed objects (challenging):} one sofa + one chair (heterogeneous shapes).
\end{itemize}
Each scenario is evaluated over $20$ i.i.d.\ trials with randomized start/goal configurations and obstacle placement.

\subsection{Metrics}

\noindent \textbf{Inclusion rate:}
Fraction of ground-truth obstacle points inside the union of $95\%$ ellipsoids per step; we also report \emph{stepwise validity} (percentage of steps with $\ge 95\%$ inclusion).  

\noindent \textbf{Compactness:}
Inclusion normalized by set size: \# of ground-truth points inside the union divided by the union area. The union area is estimated by Monte Carlo method (\cite{kroese2013handbook}).

\noindent\textbf{Navigation performance:}
We report the number of control steps, path length, path efficiency (ratio between the Euclidean start–goal distance and the actual path length, notated as Eff. in Table.~\ref{tab:nav_eval}), average control time (mean computation time per control command, Ctrl.), and success rate (percentage of trials reaching the goal without collision, Succ.).

\noindent\textbf{Baselines:} Single-ellipsoid IPC with identical training and planner with $K{=}1$ and $K{=}2$.  

\noindent\textbf{Ablations:} Vary loss function by (i) removing $L_{\mathrm{nll}}$ or (ii) removing $L_{\mathrm{empty}}$.

\subsection{Validity of GM-IPC}

We evaluate validity in closed-loop rollouts across four scenarios. In Table~\ref{tab:validity}, we record \emph{stepwise validity} as the percentage of steps in which at least $95\%$ of ground-truth (GT) obstacle points are covered by at least one ellipsoid. The \emph{inclusion rate} is defined as the ratio of covered GT points to total GT points. \emph{Compactness} is the inclusion rate normalized by the union area (estimated via Monte Carlo sampling). We use Ellip-1 and Ellip-2 to denote the baseline ellipsoid-based IPC models with non-adaptive deterministic boundaries. For multi-obstacle case, Ellip-2 (two ellipsoids, one per obstacle) is included as an additional baseline.

\begin{table}[t]
\centering
\scriptsize
\setlength{\tabcolsep}{2pt}
\renewcommand{\arraystretch}{0.9}
\caption{Validity of IPC models.}
\begin{tabular}{llccccc}
\toprule
\multirow{2}{*}{Scenario} & 
\multirow{2}{*}{Model} &
\multicolumn{2}{c}{Mean Inclusion Rate} & 
\multicolumn{2}{c}{\% Steps $\geq 95\%$} &
\multicolumn{1}{c}{Compactness} \\
\cmidrule(lr){3-4} \cmidrule(lr){5-6} \cmidrule(lr){7-7}
 & & Final & All & Final & All & Final ($\mu \pm \sigma$) \\
\midrule
Chair      & GMM     & \textbf{1.000} & \textbf{0.999} & \textbf{100.0} & \textbf{99.8} & $9.52 \pm 0.46$ \\
           & Ellip   & 0.999 & 0.989 & \textbf{100.0} & 96.6 & \textbf{11.74 $\pm$ 0.35} \\
\midrule
Sofa       & GMM     & \textbf{0.997} & \textbf{0.993} & \textbf{100.0} & \textbf{97.8} & \textbf{10.50 $\pm$ 1.00} \\
           & Ellip   & 0.994 & 0.988 & 96.7           & 93.7           & $9.48 \pm 0.39$ \\
\midrule
Multi Sofa & GMM     & \textbf{0.999} & \textbf{0.998} & \textbf{100.0} & 99.0           & \textbf{5.58 $\pm$ 0.30} \\
           & Ellip-1 & 0.990 & 0.989 & \textbf{100.0} & 97.9           & $4.05 \pm 0.45$ \\
           & Ellip-2 & \textbf{0.999} & 0.997 & \textbf{100.0} & \textbf{99.4} & $4.60 \pm 0.22$ \\
\midrule
Mixed      & GMM     & \textbf{0.996} & \textbf{0.995} & \textbf{100.0} & \textbf{98.3} & \textbf{7.35 $\pm$ 0.48} \\
           & Ellip-1 & 0.983 & 0.984 & 90.0           & 90.9           & $4.87 \pm 0.68$ \\
           & Ellip-2 & 0.995 & 0.992 & 96.7           & 97.3           & $6.25 \pm 0.34$ \\
\bottomrule
\end{tabular}
\label{tab:validity}
\end{table}

As a proof of concept, Fig.~\ref{fig:navigation_pipeline} shows that GM-IPC adapts the number of active components: in the single-sofa case, two to three ellipsoids suffice for high compactness, while remaining components are automatically shrunk and assigned negligible weights, limiting their influence during navigation.

Table~\ref{tab:validity} indicates that GM-IPC achieves near-perfect final inclusion and high stepwise validity across all scenarios. In the Chair and Sofa cases, coverage is reliably maintained. In Multi-Sofa and Mixed scenes, inclusion remains very high while compactness decreases, reflecting the challenge of covering disjoint or heterogeneous shapes with a finite number of components; nevertheless, validity remains strong.

Overall, empirical miscoverage is rare: $\geq 97\%$ steps meet or exceed the $\geq 95\%$ requirement. The compactness trend aligns with intuition: simpler convex objects admit larger, more conservative ellipsoids, whereas multiple or irregular objects require tighter, more fragmented coverings.

\noindent\textbf{Ablation Study.} To demonstrate the effect of the regularization loss terms, we conducted an ablation study on the single-sofa case by removing (i) the NLL term (see Eq.~\ref{eq:nll_loss}) or (ii) the empty penalty term (see Eq.~\ref{eq:empty_loss}), while keeping all other training configurations identical and performing inference on the same test set.
As shown in Fig.~\ref{fig:ablation_loss}, when the empty-penalty term is removed, the IPC network tends to predict large, low-weight ellipsoids, which may cause the navigation to take inefficient detours. When the NLL term is removed, the predicted ellipsoids often overlap excessively and fail to fully cover the ground-truth region.

\begin{table}[h]
\centering
\scriptsize
\setlength{\tabcolsep}{3pt}
\renewcommand{\arraystretch}{0.8}
\caption{Ablation study on loss functions.}
\begin{tabular}{lccccc}
\toprule
\multirow{2}{*}{Loss Function} &
\multicolumn{2}{c}{Mean Inclusion Rate} & 
\multicolumn{2}{c}{\% Steps $\geq 95\%$} &
\multicolumn{1}{c}{Compactness} \\
\cmidrule(lr){2-3} \cmidrule(lr){4-5} \cmidrule(lr){6-6}
 & Final & All & Final & All & Final ($\mu \pm \sigma$) \\
\midrule
Full  & 0.997 & 0.993 & 100.0 & 97.8 & 10.50 $\pm$ 1.00 \\
No Empty Penalty &   1.0    &   1.0    &     100.0   &   100.0      & 1.00 $\pm$ 0.00 \\
No NLL           &   0.997    &   0.993   &    100.0    &   97.0   &  9.43 $\pm$ 0.64 \\
\bottomrule
\end{tabular}
\label{tab:ablation_loss}
\end{table}

\begin{figure}[h!]
    \centering
    \begin{subfigure}[b]{0.15\textwidth}
        \includegraphics[width=\textwidth]{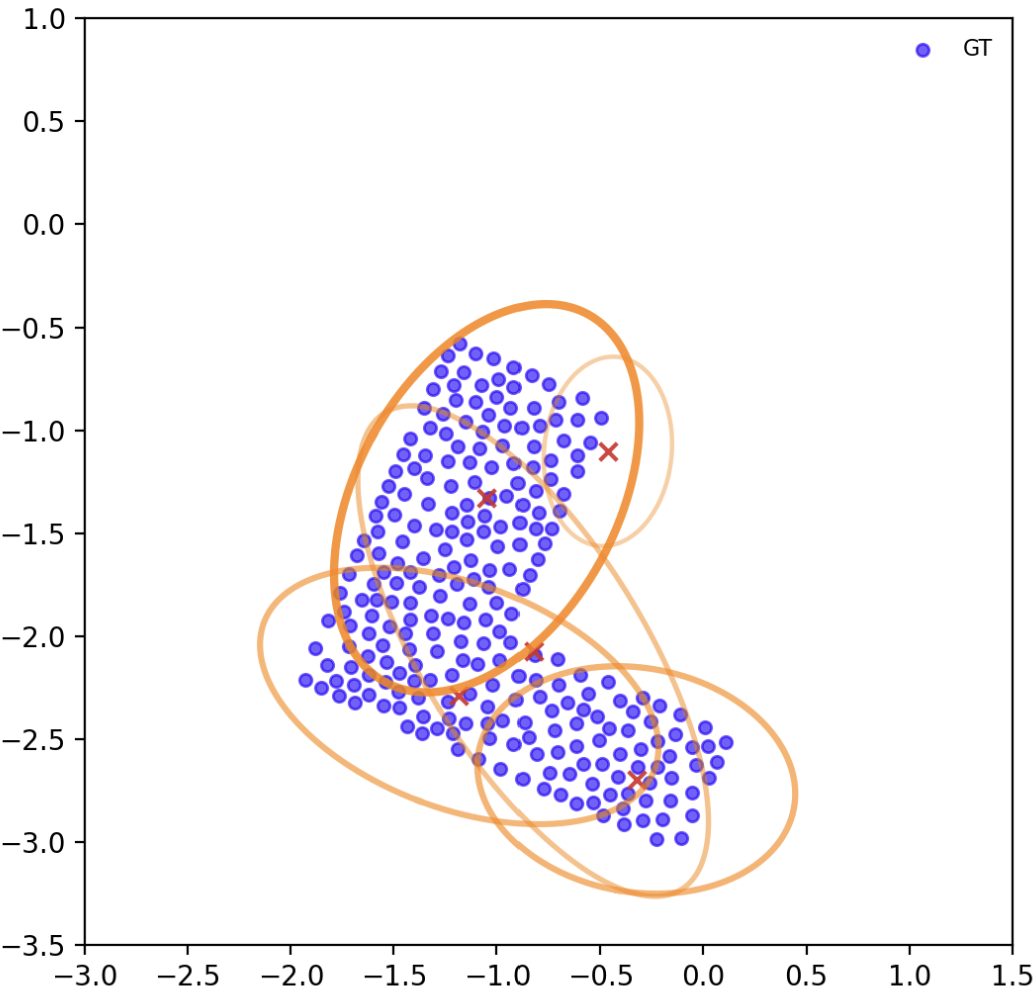}
        \caption{}
        \label{fig:ablation_full}
    \end{subfigure}
    \hspace{0.015\textwidth}
    \begin{subfigure}[b]{0.15\textwidth}
        \includegraphics[width=\textwidth]{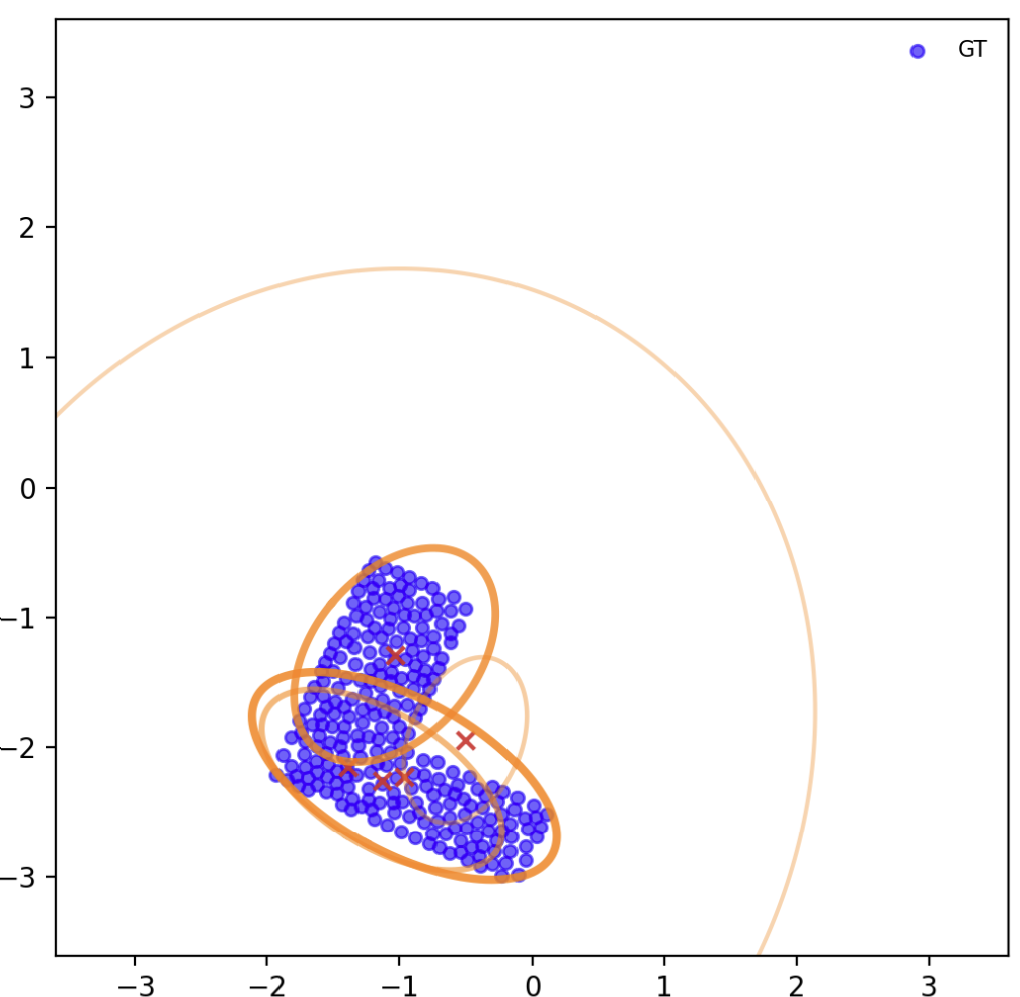}
        \caption{}
        \label{fig:ablation_no_empty}
    \end{subfigure}
    \hspace{0.015\textwidth}
    \begin{subfigure}[b]{0.15\textwidth}
        \includegraphics[width=\textwidth]{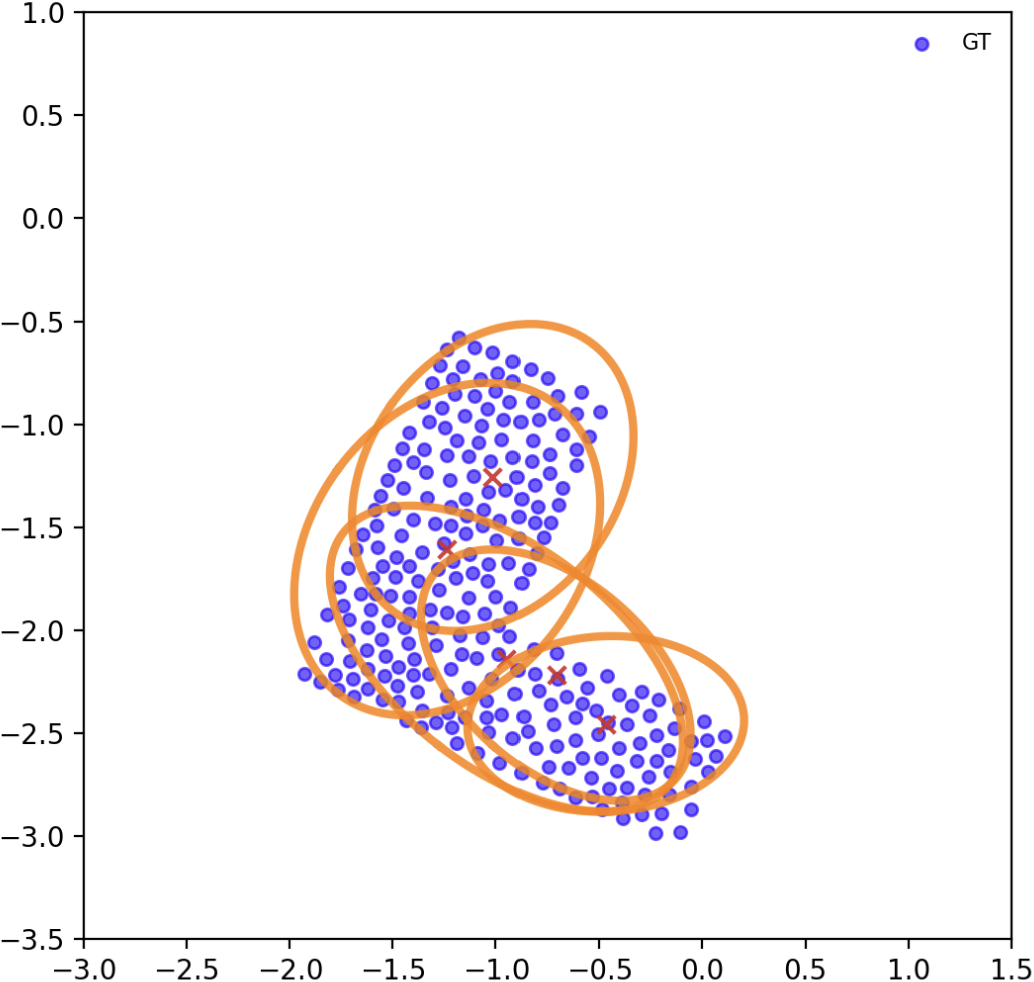}
        \caption{}
        \label{fig:ablation_no_nll}
    \end{subfigure}
    \caption{Ablation study on the loss function in the single-sofa case.
    (a) full loss function, (b) without the empty penalty term, and (c) without the NLL term.}
    \label{fig:ablation_loss}
\end{figure} \vspace{-1em}

\subsection{Comparison with Ellipsoid-based IPC}

We compare GM-IPC to single-ellipsoid IPC along two axes: \emph{(i)} compactness at matched coverage and \emph{(ii)} confidence at matched set size (equal union area of ellipsoids).

From Table~\ref{tab:validity}, we observe that GM-IPC achieves higher mean inclusion rates than the baseline across all four scenarios. Compactness gains are most pronounced as scene complexity increases. In the single-sofa case, compactness is slightly lower, likely due to the difficulty of training the network to align five distinct components compactly, compared to fitting a single near-spherical ellipsoid.

To further assess confidence, we compare GM-IPC and Ellip-IPC under the same union area. Given a fixed target size $S$ which is near the ground-truth area, we select a confidence level to define the ellipsoidal boundary, compute the resulting Gaussian union area, and adjust the confidence level so that the union size equals $S$. Intuitively, a higher solved confidence for the same allowed area indicates greater statistical certainty of covering ground-truth points. Across all scenarios (see Table~\ref{tab:confidence}), GM-IPC consistently attains higher solved confidence than Ellip-IPC, indicating more reliable coverage for a given set size.

\begin{table}[h]
\centering
\begin{tabular}{lcc}
\toprule
\textbf{Scenario} & \textbf{Ellip-IPC} & \textbf{GM-IPC} \\
\midrule
Mixed       & 0.817 & 0.982 \\
Two sofas   & 0.773 & 0.935 \\
Single sofa & 0.953 & 0.990 \\
Single chair& 0.894 & 0.929 \\
\bottomrule
\end{tabular}
\caption{Average solved confidence level under equal union area of ellipsoids (30 trials per scenario; aggregated across steps).}
\label{tab:confidence}
\end{table} \vspace{-1em}

\begin{figure}[t]
    \centering
    \begin{subfigure}{0.45\linewidth}
        \includegraphics[width=\linewidth]{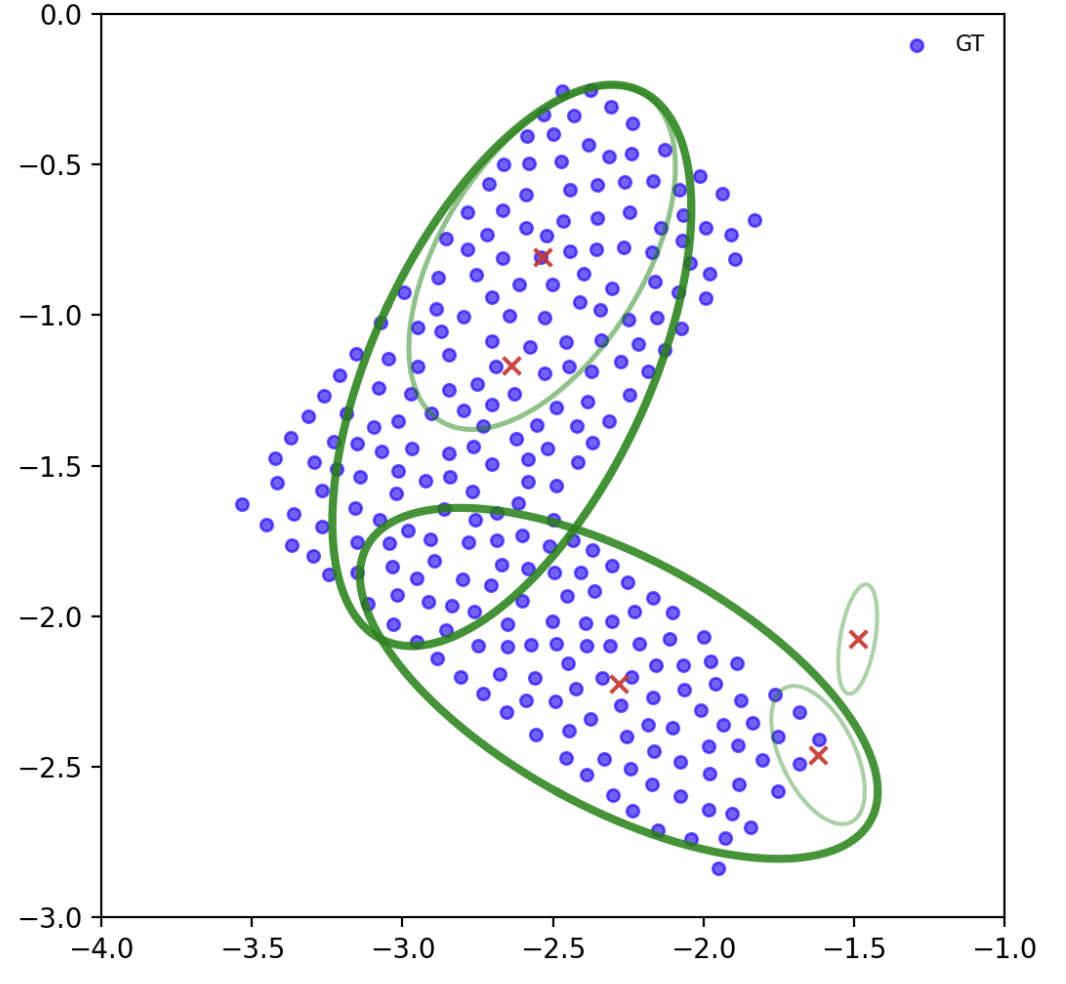}
        \caption{}
    \end{subfigure}\hfill
    \begin{subfigure}{0.45\linewidth}
        \includegraphics[width=\linewidth]{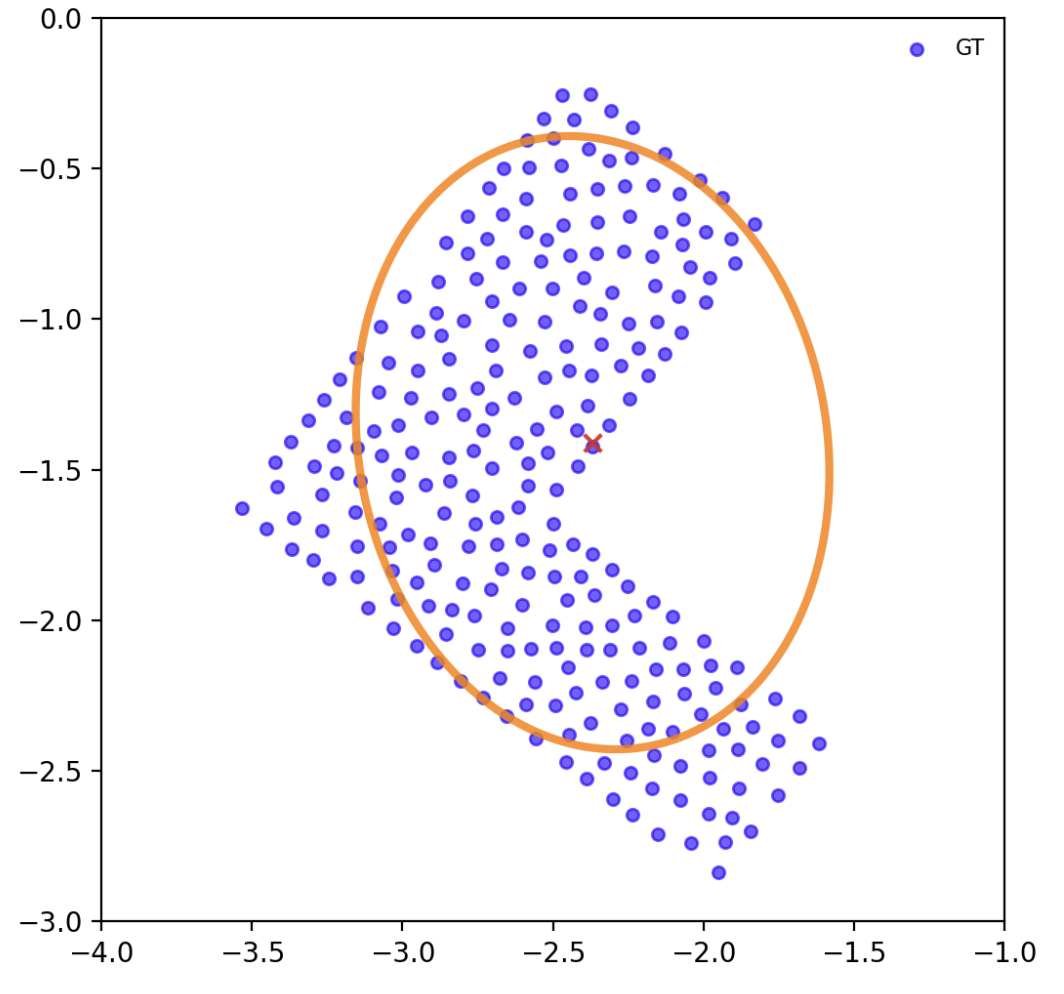}
        \caption{}
    \end{subfigure}

    \vspace{0.5em}

    \begin{subfigure}{0.45\linewidth}
        \includegraphics[width=\linewidth]{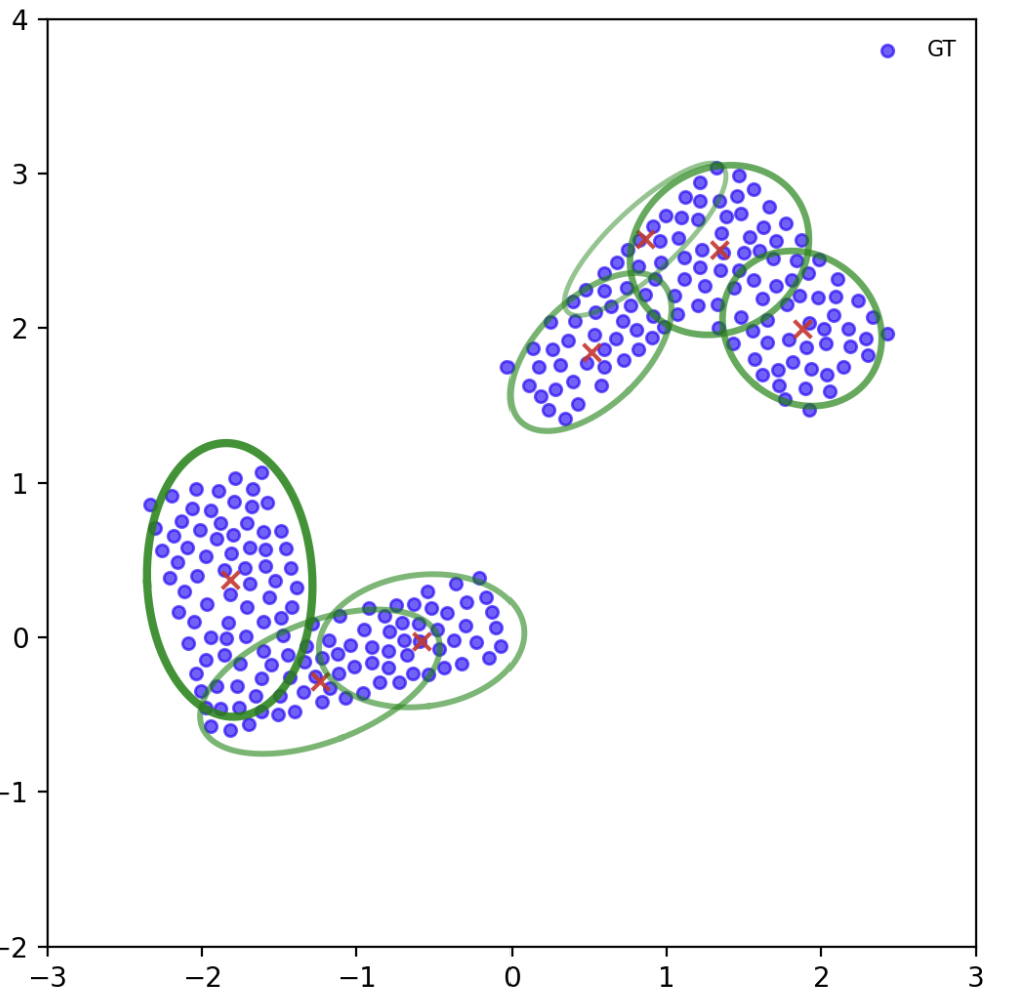}
        \caption{}
    \end{subfigure}\hfill
    \begin{subfigure}{0.45\linewidth}
        \includegraphics[width=\linewidth]{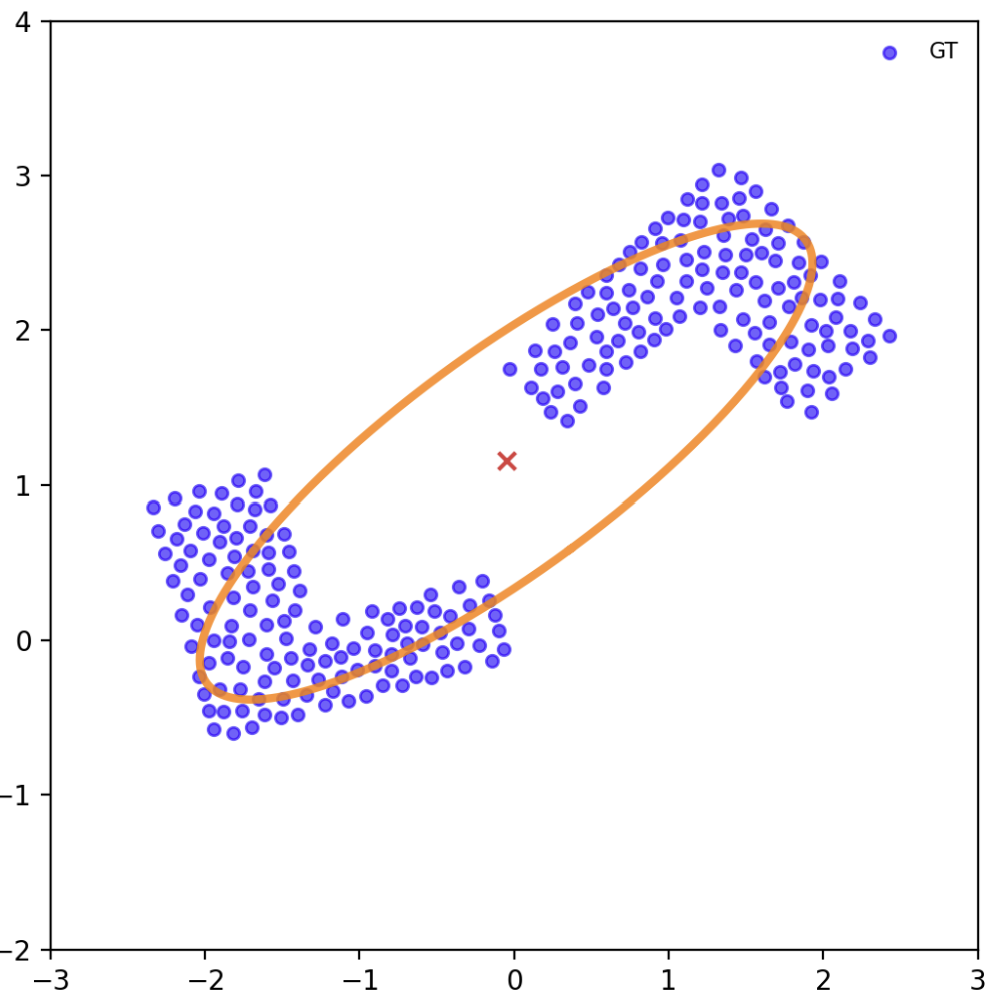}
        \caption{}
    \end{subfigure}
    \caption{Coverage under equal union area of ellipsoids:
    (a) GMM single-sofa, (b) Ellipsoid single-sofa, 
    (c) GMM multi-sofa, (d) Ellipsoid multi-sofa.}
    \label{fig:sofa_comparison}
\end{figure}

\subsection{Application in Navigation} \vspace{-0.5em}

To demonstrate the contribution of GM-IPC’s compact fitting ability to safe navigation, we conducted experiments in multi-object scenarios (two sofas, and mixed obstacles, i.e. one chair plus one sofa) with active online navigation. The overall navigation pipeline is described in Section~\ref{sec:nav_pip}.  

For each scenario, we ran 50 independent trials, randomizing the start and goal positions as well as the obstacle poses. The start and goal positions were chosen near the $\pm 5.0$ boundary of the environment to ensure that the robot traversed sufficiently long trajectories. This setting both increases the difficulty of obstacle-avoidance tasks and guarantees complete data collection.  

During each trial, we recorded the navigation success rate (defined as reaching the goal without collision within the 500 steps), the total number of steps, the trajectory length, and the path efficiency. Path efficiency is computed as the ratio between the Euclidean distance from start to goal and the actual trajectory length. Among these metrics, success rate reflects the safety guarantee, while total steps and path efficiency quantify the efficiency benefits of GM modeling (see Table~\ref{tab:nav_eval}).

\begin{table}[h!]
\centering
\caption{Navigation evaluation results across scenarios (50 trials each).}
\label{tab:nav_eval}
\begin{tabular}{llccccc}
\toprule
Scenario & Model & Steps & Len & Eff. & Ctrl & Succ. \\
\midrule
\multirow{2}{*}{Mixed} 
  & Ellip-1 & 434.2 & 12.69 & 0.90 & 0.18 & 0.76 \\
  & Ellip-2 & 408.3 & 11.95 & 0.95 & 0.12 & 0.70 \\
  & GMM   & \textbf{403.1} & \textbf{11.62} & \textbf{0.97} & \textbf{0.12} & \textbf{0.88} \\
\midrule
\multirow{2}{*}{Multi Sofa} 
  & Ellip-1 & 440.0 & 12.79 & 0.89 & 0.22 & 0.74 \\
  & Ellip-2 & 412.4 & 12.08 & 0.94 & 0.13 & 0.86 \\
  & GMM   & \textbf{397.2} & \textbf{11.42} & \textbf{0.99} & \textbf{0.13} & \textbf{0.90} \\
\bottomrule
\end{tabular}
\end{table}

From Table~\ref{tab:nav_eval}, we can tell that GM-IPC outperforms in both scenarios in all metrics. This shows that Gaussian mixture modeling of uncertainty sets provides not only more compact and efficient uncertainty sets, but also safer guidance for obstacle avoidance.

From Figure~\ref{fig:gmm_nav_traj}, 2 typical cases illustrate where GMM-IPC provides stronger navigation support. In the top row, when only a narrow gap exists between two obstacles, the ellipsoid-based IPC is overly conservative: although passing through the gap is the most effective path, the robot is forced to detour, whereas GM-IPC’s compact representation enables it to follow a more efficient trajectory.

In the bottom row, the limited expressive power of a single ellipsoid often fails to fully cover the ground-truth obstacle region where there are fewer perceived points. In contrast, GM-IPC employs multiple ellipsoids to capture different possible obstacle configurations, including those hidden in the camera’s blind spots. Thus, regions poorly covered by perceived points are still better represented than with ellipsoid-based IPC, demonstrating GM-IPC’s greater robustness under unstable perception.

\begin{figure}[t]
    \centering
    \begin{subfigure}{0.48\linewidth}
        \includegraphics[width=\linewidth]{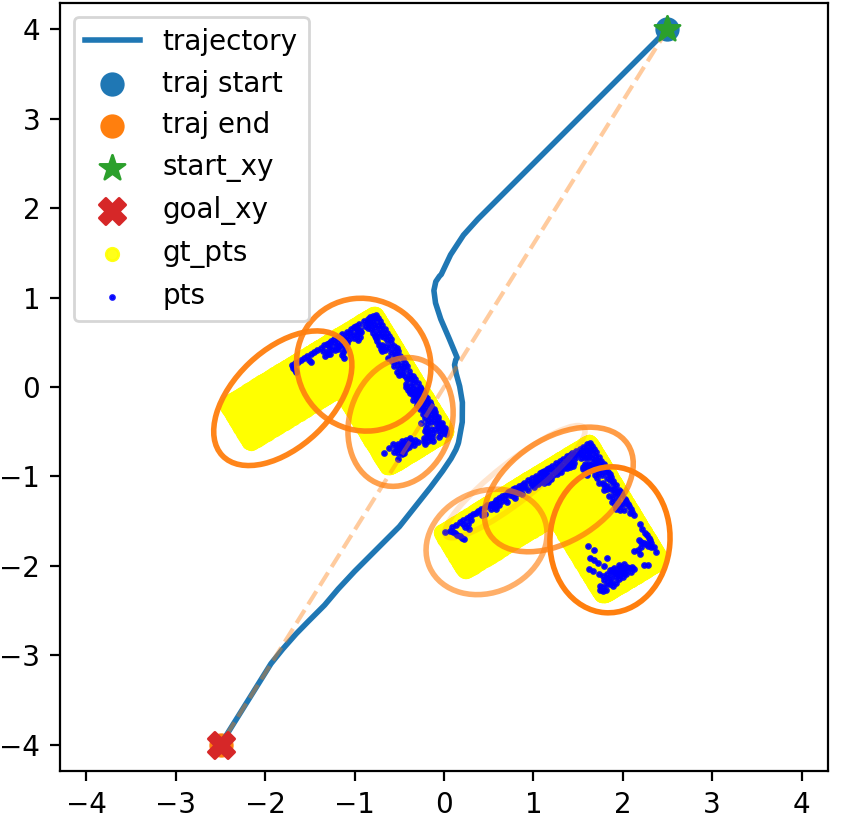}
        \caption{}
        \label{fig:gmm_nav_traj:a}
    \end{subfigure}
    \hfill
    \begin{subfigure}{0.48\linewidth}
        \includegraphics[width=\linewidth]{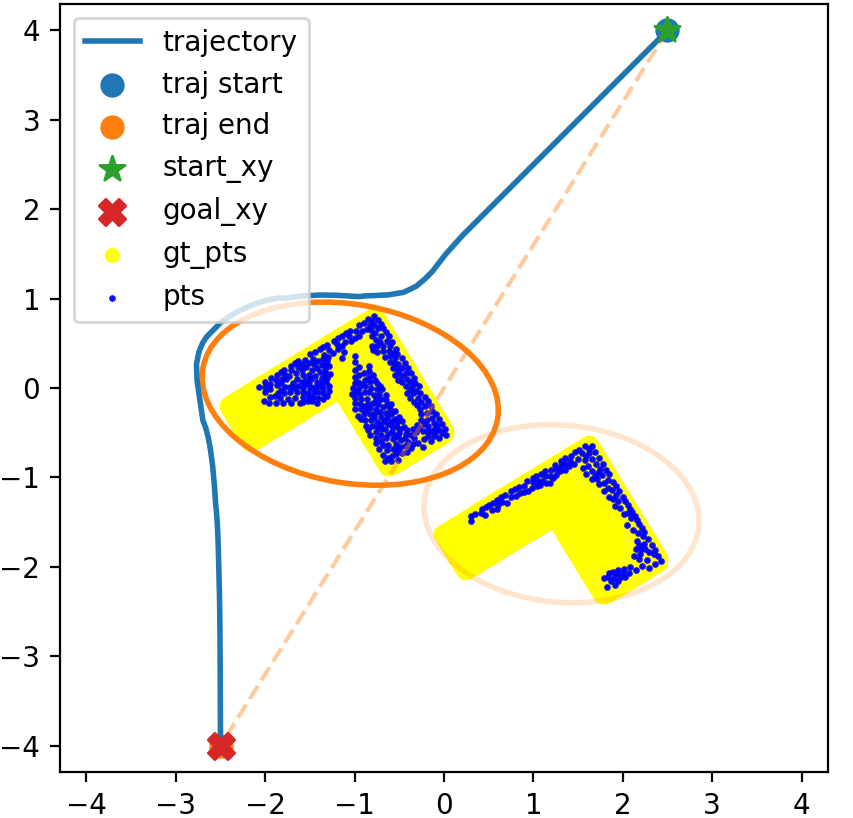}
        \caption{}
        \label{fig:gmm_nav_traj:b}
    \end{subfigure}

    \vspace{0.5em}

    \centering
    \begin{subfigure}{0.48\linewidth}
        \includegraphics[width=\linewidth]{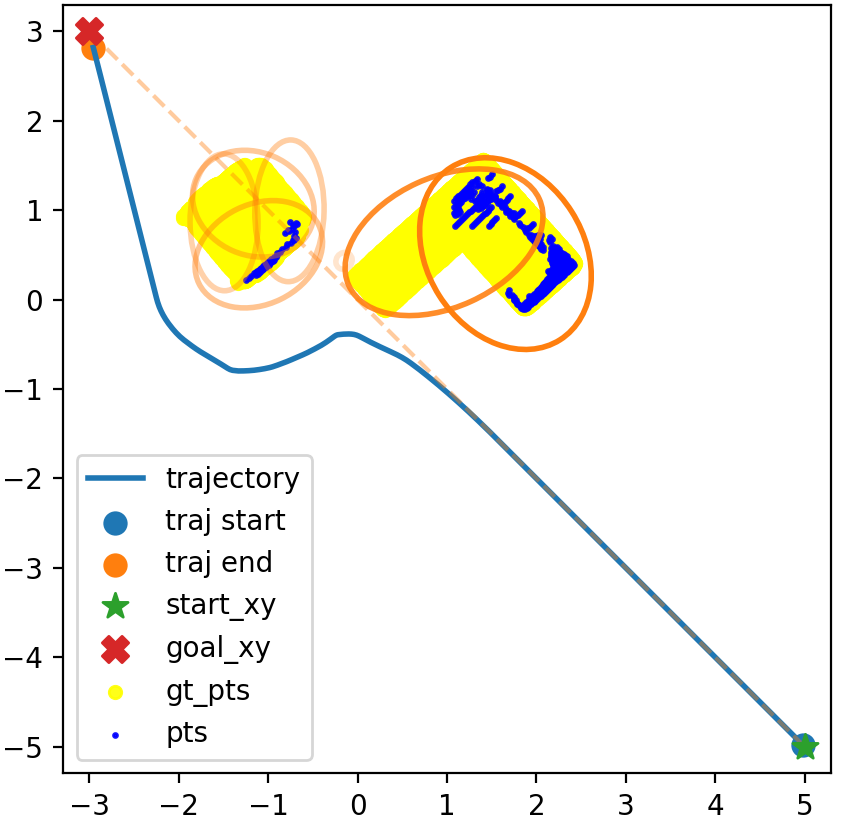}
        \caption{}
        \label{fig:gmm_nav_traj:c}
    \end{subfigure}
    \hfill
    \begin{subfigure}{0.48\linewidth}
        \includegraphics[width=\linewidth]{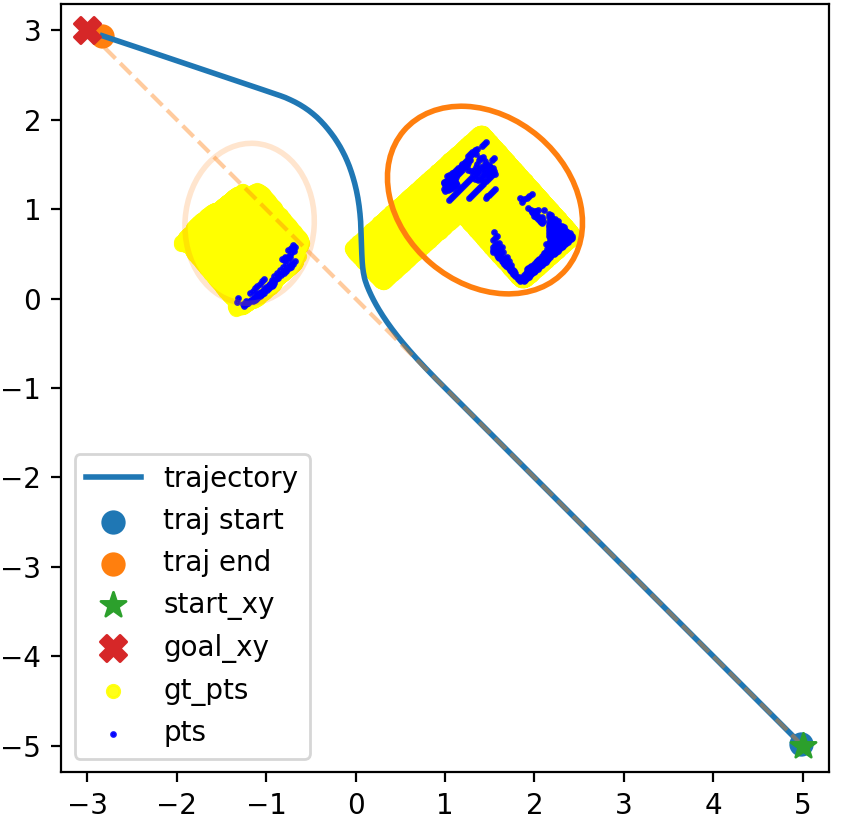}
        \caption{}
        \label{fig:gmm_nav_traj:d}
    \end{subfigure}

    \centering
    \begin{subfigure}{0.45\linewidth}
        \includegraphics[width=\linewidth]{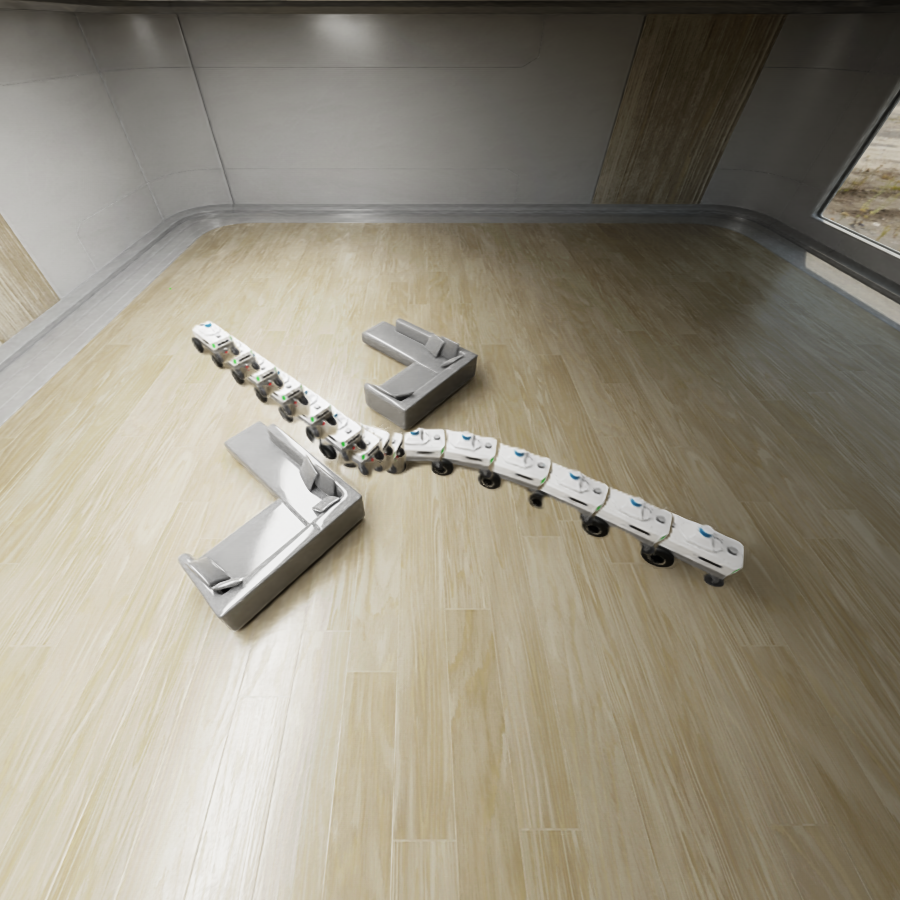}
        \caption{}
        \label{fig:gmm_nav_traj:e}
    \end{subfigure}
    \hfill
    \begin{subfigure}{0.45\linewidth}
        \includegraphics[width=\linewidth]{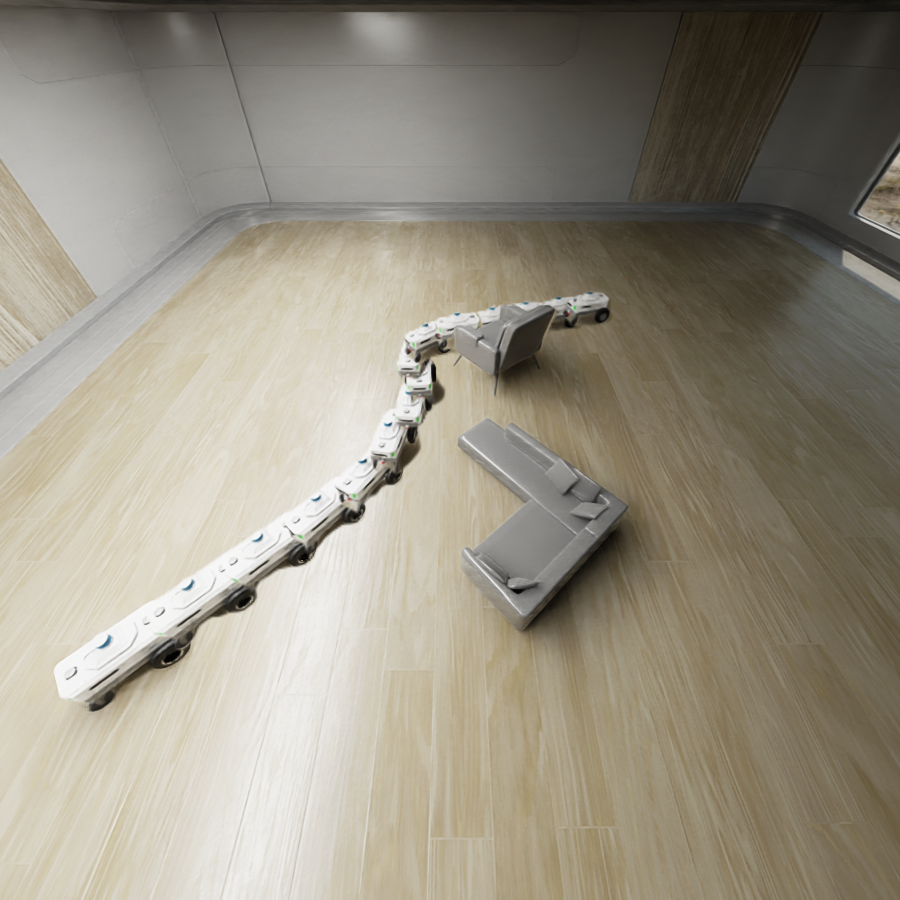}
        \caption{}
        \label{fig:gmm_nav_traj:f}
    \end{subfigure}
    
    \caption{Navigation trajectories in multi-object scenarios using IPC. Yellow regions indicate the ground-truth obstacles, and blue points are the observations from the perception module. (a, e) Two-sofa case with GM-IPC, (b) Two-sofa case with Ellip-IPC, (c, f) Mixed case with GM-IPC, (d) Mixed case with Ellip-IPC.}
    \label{fig:gmm_nav_traj}
\end{figure} \vspace{-0.5em}

\section{Conclusion} \vspace{-0.7em}

We presented \emph{GM-IPC}, a probabilistic extension of Inverse Perception Contracts that represents fine-grained perception uncertainty as a union of confidence ellipsoids induced by a Gaussian mixture. The formulation retains a formal coverage interpretation, and the learning objective couples an inclusion-driven term with regularizers for specialization and compactness. At the trial level, a PAC-style generalization bound links empirical inclusion loss to true miscoverage, and a consistency corollary shows that empirical risk minimization attains optimal population risk within the hypothesis class. We integrated GM-IPC with an MPC-CBF planner by converting mixture components into barrier functions and adapting constraint tightness to the predicted significance of obstacle regions. Experiments in simulated indoor navigation demonstrate that GM-IPC produces valid and compact uncertainty sets across scenes of varying complexity, often activating only the components needed for coverage. Relative to single-ellipsoid IPC, GM-IPC achieves comparable or higher inclusion and stepwise validity, higher solved confidence under equal union area, and improved compactness in multi-object and heterogeneous scenarios, enabling less conservative and more adaptive motion while maintaining probabilistic safety.

While GM-IPC models spatially structured, multimodal uncertainty, the reliance on a fixed maximum number of Gaussian components may limit scalability in more complex scenes. Future work will investigate adaptive component selection and more efficient uncertainty representations that balance expressiveness with real-time performance.
\vspace{-0.8em}


\bibliography{ref}

\begin{thebibliography}{10}
\providecommand{\url}[1]{#1}
\csname url@samestyle\endcsname
\providecommand{\newblock}{\relax}
\providecommand{\bibinfo}[2]{#2}
\providecommand{\BIBentrySTDinterwordspacing}{\spaceskip=0pt\relax}
\providecommand{\BIBentryALTinterwordstretchfactor}{4}
\providecommand{\BIBentryALTinterwordspacing}{\spaceskip=\fontdimen2\font plus
\BIBentryALTinterwordstretchfactor\fontdimen3\font minus \fontdimen4\font\relax}
\providecommand{\BIBforeignlanguage}[2]{{%
\expandafter\ifx\csname l@#1\endcsname\relax
\typeout{** WARNING: IEEEtran.bst: No hyphenation pattern has been}%
\typeout{** loaded for the language `#1'. Using the pattern for}%
\typeout{** the default language instead.}%
\else
\language=\csname l@#1\endcsname
\fi
#2}}
\providecommand{\BIBdecl}{\relax}
\BIBdecl

\bibitem{dean2020robust}
S.~Dean, N.~Matni, B.~Recht, and V.~Ye, ``Robust guarantees for perception-based control,'' in \emph{Learning for Dynamics and Control}.\hskip 1em plus 0.5em minus 0.4em\relax PMLR, 2020, pp. 350--360.

\bibitem{shao2024uncertainty}
W.~Shao, J.~Xu, Z.~Cao, H.~Wang, and J.~Li, ``Uncertainty-aware prediction and application in planning for autonomous driving: Definitions, methods, and comparison,'' \emph{arXiv preprint arXiv:2403.02297}, 2024.

\bibitem{berberich2025overview}
J.~Berberich and F.~Allg{\"o}wer, ``An overview of systems-theoretic guarantees in data-driven model predictive control,'' \emph{Annual Review of Control, Robotics, and Autonomous Systems}, vol.~8, no.~1, pp. 77--100, 2025.

\bibitem{guo2017calibration}
C.~Guo, G.~Pleiss, Y.~Sun, and K.~Q. Weinberger, ``On calibration of modern neural networks,'' in \emph{International conference on machine learning}.\hskip 1em plus 0.5em minus 0.4em\relax PMLR, 2017, pp. 1321--1330.

\bibitem{fontana2023conformal}
M.~Fontana, G.~Zeni, and S.~Vantini, ``Conformal prediction: a unified review of theory and new challenges,'' \emph{Bernoulli}, vol.~29, no.~1, pp. 1--23, 2023.

\bibitem{hsieh2022verifying}
C.~Hsieh, Y.~Li, D.~Sun, K.~Joshi, S.~Misailovic, and S.~Mitra, ``Verifying controllers with vision-based perception using safe approximate abstractions,'' \emph{IEEE Transactions on Computer-Aided Design of Integrated Circuits and Systems}, vol.~41, no.~11, pp. 4205--4216, 2022.

\bibitem{sun2024learning}
D.~Sun, B.~C. Yang, and S.~Mitra, ``Learning-based inverse perception contracts and applications,'' in \emph{IEEE International Conference on Robotics and Automation (ICRA)}.\hskip 1em plus 0.5em minus 0.4em\relax IEEE, 2024, pp. 11\,612--11\,618.

\bibitem{Caesar2020nuScenes}
H.~Caesar, V.~Bankiti, A.~H. Lang, and et~al., ``nuscenes: A multimodal dataset for autonomous driving,'' in \emph{IEEE/CVF Conference on Computer Vision and Pattern Recognition (CVPR)}, 2020, pp. 11\,621--11\,631.

\bibitem{Lang2019PointPillars}
A.~H. Lang, S.~Vora, H.~Caesar, L.~Zhou, J.~Yang, and O.~Beijbom, ``Pointpillars: Fast encoders for object detection from point clouds,'' in \emph{IEEE/CVF Conference on Computer Vision and Pattern Recognition (CVPR)}, 2019, pp. 12\,689--12\,697.

\bibitem{Philion2020LiftSplatShoot}
J.~Philion and S.~Fidler, ``Lift, splat, shoot: Encoding images from arbitrary camera rigs by implicitly unprojecting to 3d,'' in \emph{European Conference on Computer Vision (ECCV)}, 2020.

\bibitem{Janai2020AutonomousVisionSurvey}
J.~Janai, F.~Güney, A.~Behl, and A.~Geiger, ``Computer vision for autonomous vehicles: Problems, datasets and state-of-the-art,'' \emph{International Journal of Computer Vision}, vol. 128, no.~2, pp. 693--742, 2020.

\bibitem{wang2019pseudo}
Y.~Wang, W.-L. Chao, D.~Garg, B.~Hariharan, M.~Campbell, and K.~Q. Weinberger, ``Pseudo-lidar from visual depth estimation: Bridging the gap in 3d object detection for autonomous driving,'' in \emph{Proceedings of the IEEE/CVF conference on computer vision and pattern recognition}, 2019, pp. 8445--8453.

\bibitem{votenet}
C.~R. Qi, O.~Litany, K.~He, and L.~J. Guibas, ``Deep hough voting for 3d object detection in point clouds,'' in \emph{Proc. of the IEEE/CVF Conference on Computer Vision and Pattern Recognition (CVPR)}, 2019, pp. 9277--9286.

\bibitem{liu2021group}
Z.~Liu, Z.~Zhang, Y.~Cao, H.~Hu, and X.~Tong, ``Group-free 3d object detection via transformers,'' in \emph{Proceedings of the IEEE/CVF international conference on computer vision}, 2021, pp. 2949--2958.

\bibitem{yang2023sam3d}
Y.~Yang, X.~Wu, T.~He, H.~Zhao, and X.~Liu, ``Sam3d: Segment anything in 3d scenes,'' \emph{arXiv preprint arXiv:2306.03908}, 2023.

\bibitem{lindemann2021robust}
L.~Lindemann, M.~Cleaveland, Y.~Kantaros, and G.~J. Pappas, ``Robust motion planning in the presence of estimation uncertainty,'' in \emph{IEEE Conference on Decision and Control (CDC)}.\hskip 1em plus 0.5em minus 0.4em\relax IEEE, 2021, pp. 5205--5212.

\bibitem{resiliency}
A.~Khazraei, H.~Pfister, and M.~Pajic, ``Resiliency of perception-based controllers against attacks,'' in \emph{Learning for Dynamics and Control Conference}.\hskip 1em plus 0.5em minus 0.4em\relax PMLR, 2022, pp. 713--725.

\bibitem{astorga2023perception}
A.~Astorga, C.~Hsieh, P.~Madhusudan, and S.~Mitra, ``Perception contracts for safety of ml-enabled systems,'' \emph{Proceedings of the ACM on Programming Languages}, vol.~7, no. OOPSLA2, pp. 2196--2223, 2023.

\bibitem{chen2025hyperdimensional}
L.~Chen, J.~Wang, T.~Mortlock, P.~Khargonekar, and M.~A. Al~Faruque, ``Hyperdimensional uncertainty quantification for multimodal uncertainty fusion in autonomous vehicles perception,'' in \emph{Proceedings of the Computer Vision and Pattern Recognition Conference}, 2025, pp. 22\,306--22\,316.

\bibitem{hyper_seg}
T.~Sur, S.~Mukherjee, K.~Rahaman, S.~Chaudhuri, M.~H. Khan, and B.~Banerjee, ``Hyperbolic uncertainty-aware few-shot incremental point cloud segmentation,'' in \emph{Proceedings of the Computer Vision and Pattern Recognition Conference}, 2025, pp. 11\,810--11\,821.

\bibitem{ren2024recursively}
K.~Ren, C.~Chen, H.~Sung, H.~Ahn, I.~M. Mitchell, and M.~Kamgarpour, ``Recursively feasible chance-constrained model predictive control under gaussian mixture model uncertainty,'' \emph{IEEE Transactions on Control Systems Technology}, 2024.

\bibitem{em_algo}
A.~P. Dempster, N.~M. Laird, and D.~B. Rubin, ``Maximum likelihood from incomplete data via the em algorithm,'' \emph{Journal of the royal statistical society: series B (methodological)}, vol.~39, no.~1, pp. 1--22, 1977.

\bibitem{bishop2006pattern}
C.~M. Bishop and N.~M. Nasrabadi, \emph{Pattern recognition and machine learning}.\hskip 1em plus 0.5em minus 0.4em\relax Springer, 2006, vol.~4, no.~4.

\bibitem{bartlett2002rademacher}
P.~L. Bartlett and S.~Mendelson, ``Rademacher and gaussian complexities: Risk bounds and structural results,'' \emph{Journal of Machine Learning Research}, vol.~3, pp. 463--482, 2002.

\bibitem{shalev2014understanding}
S.~Shalev-Shwartz and S.~Ben-David, \emph{Understanding Machine Learning: From Theory to Algorithms}.\hskip 1em plus 0.5em minus 0.4em\relax Cambridge University Press, 2014.

\bibitem{ames2019cbf}
A.~D. Ames, S.~Coogan, M.~Egerstedt, G.~Notomista, K.~Sreenath, and P.~Tabuada, ``Control barrier functions: Theory and applications,'' in \emph{European Control Conference (ECC)}.\hskip 1em plus 0.5em minus 0.4em\relax Ieee, 2019, pp. 3420--3431.

\bibitem{mpccbf}
J.~Zeng, B.~Zhang, and K.~Sreenath, ``Safety-critical model predictive control with discrete-time control barrier function,'' in \emph{American Control Conference (ACC)}, 2021, pp. 3882--3889.

\bibitem{kroese2013handbook}
D.~P. Kroese, T.~Taimre, and Z.~I. Botev, \emph{Handbook of monte carlo methods}.\hskip 1em plus 0.5em minus 0.4em\relax John Wiley \& Sons, 2013.

\end{thebibliography}
\bibliographystyle{IEEEtran}
\end{document}